\documentclass[11pt]{article} 
\usepackage{hyperref}
\usepackage{url}
\usepackage{amsmath,amssymb,amsthm}
\usepackage{hyperref}
\usepackage[margin=2.8cm]{geometry}
\usepackage{eucal}
\usepackage{bm}
\usepackage{enumerate}
\usepackage{algorithm}
\usepackage{algpseudocode}
\usepackage{float}
\usepackage{graphicx}
\usepackage{psfrag}

\newfloat{algorithm}{t}{lop}


\newcommand{\reals}{\mathbb{R}}
\newcommand{\pr}{\mathbb{P}}
\newcommand{\eps}{\varepsilon}
\newcommand{\ex}{\mathbb{E}}
\newcommand{\Xb}{\mathbf{X}}
\newcommand{\Xbs}{\mathbf{X}_*}

\newcommand{\rhob}{\overline{\rho}}

\newcommand{\ical}{\mathcal{I}}

\newcommand{\nats}{\mathbb{N}}


\newcommand{\Et}{\widetilde{E}}

\newcommand{\ei}{e}
\newcommand{\gam}{\gamma}

\newcommand{\Zs}{Z_*}
\newcommand{\Xs}{X_*}
\newcommand{\pcal}{\mathcal{P}}
\newcommand{\mdim}{m}

\newcommand{\vnorm}[1]{\|#1\|}
\DeclareMathOperator{\lip}{Lip}
\newcommand{\Qf}{Q}

\newcommand{\Tf}{T}
\newcommand{\Tex}{\Tf_{\mathrm{ex}}}
\newcommand{\Tap}{\Tf_{\mathrm{ap}}}
\newcommand{\qf}{q}
\newcommand{\thb}{\bm{\theta}}
\newcommand{\yv}{y}
\newcommand{\yvt}{\widetilde{y}}
\newcommand{\wv}{w}
\newcommand{\wvt}{\widetilde{w}}
\newcommand{\hv}{\bm{h}}
\newcommand{\Rf}{\mathcal{R}}
\newcommand{\Mf}{\mathcal{M}}
\newcommand{\Pt}{\widetilde{P}}
\newcommand{\gamt}{\widetilde{\gam}}
\newcommand{\omgu}{\overline{\omega}}
\newcommand{\omgd}{\underline{\omega}}
\newcommand{\xcal}{\mathcal{X}}

\newcommand{\bid}{b}
\newcommand{\onenorm}[1]{\|#1\|_1}
\newcommand{\onemnorm}[1]{|\!|\!|#1|\!|\!|_1}
\newcommand{\mnorm}[1]{|\!|\!|#1|\!|\!|}
\newcommand{\infnorm}[1]{\|#1\|_\infty}
\newcommand{\infmnorm}[1]{|\!|\!|#1|\!|\!|_\infty}
\newcommand{\jacob}{J}
\newcommand{\Ft}{{\widetilde{F}}}
\newcommand{\ddim}{d}
\newcommand{\eb}{\bm{e}}

\newcommand{\Htmp}{H}
\newcommand{\Ktmp}{K}
\newcommand{\rb}{\overline{r}}
\newcommand{\ui}[1]{^{#1}}
\newcommand{\li}[1]{_{#1}}
\newcommand{\uii}[1]{^{(#1)}}
\newcommand{\Is}{I_*}
\newcommand{\sigs}{\sigma_*}
\newcommand{\dnorm}[1]{\|#1\|}

\newcommand{\Lf}{L}
\newcommand{\rv}{r}

\newcommand{\nn}{{n_0}}

\newcommand{\sgnorm}[1]{\|#1\|_{\psi_2}}

\newcommand{\lam}{\lambda}
\newcommand{\lams}{\lambda_*}

\newcommand{\gf}{g}

\newcommand{\Mcons}{M}
\newcommand{\nui}{\nu}
\newcommand{\Mbb}{\mathbb{M}}
\newcommand{\Vc}{\mathcal{V}}
\newcommand{\Yrv}{Y}
\newcommand{\Imin}{I_{\min}}
\newcommand{\sigmax}{\sigma_{\max}}
\newcommand{\KAP}{\kappa}
\newcommand{\pmult}{\odot\,}
\newcommand{\compos}{\circ}
\newcommand{\Krho}{K_\rho}
\newcommand{\Lrho}{L_\rho}
\newcommand{\Mcon}{D}
 \newcommand{\lipc}{L}

\newcommand{\gfb}{\bar{g}}
\newcommand{\rbar}{\bar{r}}
\newcommand{\sbar}{\bar{s}}
\newcommand{\iid}{\stackrel{iid}{\sim}}
\newcommand{\xt}{\widetilde{x}}
\newcommand{\xv}{x}
\newcommand{\yt}{\widetilde{y}}
\newcommand{\thbt}{\widetilde{\thb}}
\newcommand{\thbs}{\thb^*}
\newcommand{\onevec}{\bm{1}}

\newcommand{\mybar}[1]{\overline{#1}}
\newcommand{\gams}{\gamma^*}
\newcommand{\Ncons}{N}

\newcommand{\dagg}[1]{{#1}^\dagger}
\newcommand{\thbb}{\dagg{\thb}}
\newcommand{\pthet}{{\eta}}

\newtheorem{thm}{Theorem}
\newtheorem{prop}{Proposition}
\newtheorem{cor}{Corollary}
\newtheorem{lem}{Lemma}

\newtheorem{exa}{Example}

\title{Bayesian inference as iterated random functions \\with 
applications to sequential inference in graphical models\thanks{Part of this work is presented at the NIPS 2013 conference.}}

\author{
Arash A.~Amini
\\
\and
XuanLong  Nguyen \\
}

%


\begin{document}

\maketitle

\begin{abstract}
We propose a general formalism of iterated random functions
with semigroup property, under which exact and approximate Bayesian 
posterior updates can be viewed as specific instances. 
A convergence theory for iterated random functions 
is presented. As an application of the general theory we analyze
convergence behaviors of exact and approximate message-passing
algorithms that arise in a sequential change point detection
problem formulated via a latent variable directed graphical model. 
The sequential inference algorithm and its supporting theory are 
illustrated by simulated examples. 
\end{abstract}

\section{Introduction}
The sequential posterior updates play a central role in
many Bayesian inference procedures. As an example, in Bayesian
inference one is interested in the posterior probability
of variables of interest given the data observed sequentially up to 
a given time point. As a more specific example which provides the motivation
for this work, in a sequential change point detection 
problem~\cite{AS-78}, the key
quantity is the posterior probability that a change has
occurred given the data observed up to present time.
When the underlying probability model is complex, e.g., a large-scale
graphical model, the calculation of such quantities in a fast
and online manner is a formidable challenge. In such situations 
approximate inference
methods are required -- for graphical models, message-passing
variational inference algorithms present a viable option~\cite{Pearl88,Jordan-Statsci-04}.

In this paper we propose to treat Bayesian inference in a complex 
model as a specific instance of an abstract system of iterated random functions (IRF),
a concept that originally arises in the study of Markov chains 
and stochastic systems~\cite{DiaFre99}. The key technical
property of the proposed IRF formalism that enables the
connection to Bayesian inference under conditionally independent
sampling is the
\emph{semigroup} property, which shall be defined shortly in the 
sequel. It turns out that most exact and approximate Bayesian inference 
algorithms may be viewed as specific instances of an IRF system. 
The goal of this paper is to present a general convergence theory for 
the IRF with semigroup property.  The theory is then applied to the 
analysis of exact and approximate
message-passing inference algorithms, which arise in the context
of distributed sequential change point problems using latent variable
and directed graphical model
as the underlying modeling framework.

We wish to note a growing literature on message-passing and sequential
inference based on graphical modeling
~\cite{Kreidl-Willsky-07,Cetin-etal-06,NguAmiRaj12,frank12}. 
On the other hand, convergence and error analysis of 
message-passing algorithms in graphical models is quite rare and challenging,
especially for approximate algorithms, and they are
typically confined to the specific form of belief propagation 
(sum-product) algorithm ~\cite{ihler05b,ihler07b,Roosta-08}. To the best 
of our knowledge, there is no existing work on the analysis of
message-passing inference algorithms for calculating conditional
(posterior) probabilities for latent random variables present in 
a graphical model. While such an analysis is a byproduct of this
work, the viewpoint we put forward here that equates Bayesian posterior
updates to a system of iterated random functions with semigroup property
seems to be new and may be of general interest.

The paper is organized as follows.  In
Sections~\ref{sec:gen:iter:intro}--~\ref{sec:main:abs:result}, we
introduce the general IRF system and provide our main result on its
convergence. The proof is deferred to Section~\ref{sec:proof:abs}. As
an example of the application of the result, we will provide a
convergence analysis for an approximate sequential inference algorithm
for the problem of multiple change point detection using graphical
models. The problem setup and the results are discussed in
Section~\ref{sec:mcp:setup}. An auxiliary result needed for the change
point application is proved in Section~\ref{sec:proof:prop:alg:rep}
with some of the more technical aspects left to the appendices.

\section{Bayesian posterior updates as iterated random functions}
\label{sec:gen:iter:intro}
In this paper we shall restrict ourselves to multivariate distributions of binary random variables.
To describe the general iteration, let $\pcal_\ddim :=
\pcal(\{0,1\}^d)$ be the space of probability measures on
$\{0,1\}^d$. The iteration under consideration recursively produces a
random sequence of elements of $\pcal_\ddim$, starting from some
initial value. We think of $\pcal_\ddim$ as a subset of
$\reals^{2^d}$ equipped with the $\ell_1$ norm (that is, the total
variation norm for discrete probability measures). To simplify, let $\mdim := 2^d$, and for $x \in
\pcal_\ddim$, index its coordinates as $x =
(x\ui{0},\dots,x\ui{m-1})$. For  $\thb \in \reals_+^\mdim$, consider
the function $\qf_{\thb}: \pcal_\ddim \to \pcal_\ddim$, defined by
\begin{align}\label{eq:qf:def}
  \qf_{\thb}(x) := \frac{x \pmult \thb}{x^T \thb}
\end{align}
where $x^T \thb = \sum_i x\ui{i} \thb\ui{i}$ is the usual inner
product on $\reals^m$ and $x \pmult \thb$ is pointwise multiplication
with coordinates $[x \pmult \thb]\ui{i} := x\ui{i} \thb\ui{i}$, for
$i=0,1,\dots,\mdim-1$. This function models the prior-to-posterior
update according to the Bayes rule. One can think of $\thb$ as the
likelihood and $x$ as the prior distribution (or the posterior in the previous
stage) and $q_{\thb}(x)$ as the (new) posterior based on the two. The
division by $x^T \thb$ can be thought of as the division by the
marginal to make a valid probability vector. (See
Example~\ref{ex:classic:cp} below.)

We consider the following general iteration
\begin{align}\label{eq:itr:def}
  \begin{split}
    \Qf_n(x) &= \qf_{\thb_n}(\Tf(\Qf_{n-1}(x)), 
    \quad n \ge 1, \\
    \Qf_0(x) &= x,
  \end{split}
\end{align}
for some deterministic operator $\Tf : \pcal_\ddim \to \pcal_\ddim$
and an i.i.d. random sequence $\{\thb_n\}_{n\ge 1} \subset
\reals_+^\mdim$. By changing  operator $\Tf$, one obtains different
iterative algorithms.

Our goal is to find sufficient conditions on $T$ and $\{\thb_n\}$ for the
convergence of the iteration to an extreme point of $\pcal_\ddim$,
which without loss of generality is taken to be $\eb\uii{0} :=
(1,0,0,\dots,0)$. Standard techniques for proving the convergence of  iterated random
functions are usually based on showing some averaged-sense contraction
property for the iteration function~\cite{DiaFre99,Ste99,WuWoo00,WuSha04}, which in our case is
$q_{\thb_n}(\Tf(\cdot))$. See~\cite{Sten12} for a recent survey. These techniques are not
applicable to our problem since $q_{\thb_n}$ is not in general
Lipschitz, in any suitable sense, precluding
$q_{\thb_n}(\Tf(\cdot))$ from satisfying the aforementioned
conditions.

Instead, the functions $\{ q_{\thb_n}\}$ have another property which
can be exploited to prove convergence; namely, they form a semi-group
under pointwise multiplication,
\begin{align}\label{eq:semi:group:def}
    \qf_{\thb \pmult \thb'} = \qf_{\thb} \compos \qf_{\thb'}, \quad \thb,\thb' \in \reals_+^\mdim,
\end{align}
where $\compos$ denotes the composition of functions. If $T$ is the
identity, this property allows us to write $\Qf_n(x) =
q_{\pmult_{i=1}^n \thb_i}(x)$ --- this is nothing but the Bayesian
posterior update equation, under conditionally independent sampling, while
modifying $T$ results in an approximate Bayesian inference procedure.
Since after suitable normalization, $\pmult_{i=1}^n \thb_i$
concentrates around a deterministic quantity, by the i.i.d. assumption
on $\{\thb_i\}$, this representation helps in determining the limit of
$\{\Qf_n(x)\}$. The main result of this paper, summarized in
Theorem~\ref{thm:itr:conv}, is that the same conclusions can be
extended to general Lipschitz maps $T$ having the desired
fixed point. 


\section{General convergence theory}\label{sec:main:abs:result}
Consider a sequence $\{\thb_n\}_{n \ge 1} \subset \reals_+^\mdim$ of
i.i.d. random elements, where $\mdim = 2^\ddim$. Let $\thb_n = (\thb_n\ui{0},\thb_n\ui{1},\dots,\thb_n\ui{m-1})$ with $\thb_n\ui{0} = 1$ for all $n$, and 
    \begin{align}\label{eq:def:thb:n:s}
        \thb_n^* := \max_{i=1,2,\dots,m-1} \thb_n\ui{i}.
    \end{align}
The normalization $\thb_n\ui{0} = 1$ is convenient for showing
convergence to $\eb\uii{0}$. This is without loss of generality, since
$\qf_{\thb}$ is invariant to scaling of $\thb$, that is
$\qf_{\thb} = \qf_{\beta \thb}$ for any $\beta > 0$. 

Assume the sequence $\{ \log \thb_n^*\}$ to be i.i.d. sub-Gaussian
with mean $ \le -\Is < 0$ and sub-Gaussian norm $ \le \sigs \in
(0,\infty)$. The sub-Gaussian norm  can be taken to be the $\psi_2$
Orlicz norm (cf. \cite[Section 2.2]{vdvWel96}), which we denote by $\|
\cdot\|_{\psi_2}$. By definition $ \sgnorm{Y} := \inf\{ C > 0:\; \ex
\psi_2(|\Yrv|/C)\le 1\} $ where $\psi_2(x) := e^{x^2} - 1$.

  Let $\|\cdot\|$ denote the $\ell_1$ norm on $\reals^\mdim$.
  Consider the sequence $\{ \Qf_n(x) \}_{n \ge 0}$ defined
  in~\eqref{eq:itr:def} based on $\{\thb_n\}$ as above, an initial
  point $x =
  (x\ui{0},\dots,x\ui{\mdim-1}) \in \pcal_\ddim$ and a Lipschitz map $\Tf : \pcal_\ddim
  \to \pcal_\ddim$. Let $\lip_\Tf$ denote the Lipschitz constant of
  $\Tf$, that is
$
    \lip_\Tf := \sup_{x\neq y} \| T(x) - T(y)\|/ \|x-y\|.
$

  Our main result regarding iteration~\eqref{eq:itr:def} is the
  following.
\begin{thm}\label{thm:itr:conv}
     Assume that
     $\lipc := \lip_\Tf\le 1$ and that $\eb\uii{0}$ is a
     fixed point of $\Tf$. Then, for all $n \ge 0$, and $\eps > 0$,
    \begin{align}\label{eq:abs:rate}
    \| Q_n(x) - \eb\uii{0}\| \le 2 \frac{1-x\ui{0}}{x\ui{0}} \big(\lipc e^{-\Is + \eps}\big)^n 
    \end{align}
    with probability at least $1 - \exp(-c\,n\eps^2/\sigs^2)$, for some absolute constant $c > 0$. 
\end{thm}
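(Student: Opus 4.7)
The plan is to track the single scalar quantity $r(y) := (1-y\ui{0})/y\ui{0}$, which gauges how close $y \in \pcal_\ddim$ is to $\eb\uii{0}$ in a multiplicative sense, show that $r$ contracts under both $\qf_{\thb_n}$ (by the random factor $\thb_n^*$) and $\Tf$ (by the deterministic factor $\lipc$), iterate the bound, and finally apply a sub-Gaussian tail bound to the i.i.d.\ sum $\sum_i \log \thb_i^*$.

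Three elementary calculations underlie this. First, since $y\ui{0} + \sum_{i\ge 1} y\ui{i} = 1$, one has $\|y - \eb\uii{0}\| = 2(1 - y\ui{0}) = 2 y\ui{0}\, r(y) \le 2\, r(y)$, so any bound on $r(\Qf_n(x))$ translates directly into a bound on the quantity of interest. Second, using $\thb\ui{0} = 1$, we have $\qf_\thb(y)\ui{0} = y\ui{0}/(y^T\thb)$, whence
\[
r(\qf_\thb(y)) \;=\; \frac{y^T\thb - y\ui{0}}{y\ui{0}} \;=\; \frac{\sum_{i\ge 1} \thb\ui{i} y\ui{i}}{y\ui{0}} \;\le\; \thb^*\, r(y).
\]
Third, the Lipschitz hypothesis together with $\Tf(\eb\uii{0}) = \eb\uii{0}$ gives $1 - \Tf(y)\ui{0} \le \lipc(1 - y\ui{0})$, and because $\lipc \le 1$ this \emph{also} forces $\Tf(y)\ui{0} \ge y\ui{0}$, so that
\[
r(\Tf(y)) \;=\; \frac{1 - \Tf(y)\ui{0}}{\Tf(y)\ui{0}} \;\le\; \frac{\lipc(1 - y\ui{0})}{y\ui{0}} \;=\; \lipc\, r(y).
\]

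Composing these two contractions along the iteration \eqref{eq:itr:def} gives $r(\Qf_n(x)) \le \lipc^n \big(\prod_{i=1}^n \thb_i^*\big)\, r(x)$. Taking logs and applying a standard sub-Gaussian Hoeffding bound to $\sum_{i=1}^n \log \thb_i^*$, whose summands have mean $\le -\Is$ and sub-Gaussian norm $\le \sigs$, one obtains $\sum_{i=1}^n \log \thb_i^* \le n(-\Is + \eps)$ with probability at least $1 - \exp(-cn\eps^2/\sigs^2)$ for an absolute constant $c > 0$. Substituting and using $r(x) = (1-x\ui{0})/x\ui{0}$ yields the claimed bound.

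The main obstacle I anticipate is the third calculation: the Lipschitz hypothesis immediately supplies $\ell_1$-contraction of $\Tf(y)$ toward $\eb\uii{0}$, but what the proof actually needs is \emph{multiplicative} contraction of the ratio $r(y)$, a strictly stronger statement in general. The saving observation is that $\lipc \le 1$ forces $\Tf(y)\ui{0} \ge y\ui{0}$, which lets one replace the denominator $\Tf(y)\ui{0}$ by the smaller $y\ui{0}$ in the natural upper bound at no cost; without this hypothesis, $\Tf$ could conceivably shift mass away from coordinate $0$ and destroy multiplicative contraction of $r$.
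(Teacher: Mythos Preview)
Your argument is correct, and it is genuinely more direct than the paper's. The paper works with the $\ell_1$ distance $\|y-\eb\uii{0}\|=2(1-y\ui{0})$ rather than your odds ratio $r(y)=(1-y\ui{0})/y\ui{0}$. Because the $\ell_1$ distance does not behave multiplicatively under $\qf_\thb$, the paper cannot simply alternate contractions; instead it introduces the auxiliary vector $\thbb=(1,\lipc\thbs\onevec_{m-1})$ and proves a ``peeling'' lemma (Lemma~\ref{lem:qf:dev}, supported by the technical Lemma~\ref{lem:M:bound}) showing $\|\qf_\thb(\Tf(x))-\eb\uii{0}\|\le\|\qf_{\thbb}(x)-\eb\uii{0}\|$. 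It then iterates this, using the semigroup property~\eqref{eq:semi:group:def} to collapse the accumulated $\qf$'s into a single $\qf_{(1,\eta_n\onevec_{m-1})}$ with $\eta_n=\lipc^n\prod_k\thbs_k$, and only at the very end invokes the bound $1-\gf_\theta(r)\le\theta(1-r)/r$---which is precisely your inequality $r(\qf_\thb(y))\le\thbs\,r(y)$, applied once rather than $n$ times. Your key simplification is the observation that $\lipc\le 1$ forces $\Tf(y)\ui{0}\ge y\ui{0}$, which upgrades the additive contraction of $1-y\ui{0}$ to multiplicative contraction of $r(y)$ and lets you dispense with Lemmas~\ref{lem:qf:dev} and~\ref{lem:M:bound} entirely. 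What the paper's route buys is a sharp statement ($N=1$ exactly in Lemma~\ref{lem:qf:dev}) and a framework phrased purely in terms of $\ell_1$ distances and the semigroup structure, which may generalize more naturally; what your route buys is a three-line proof of the same final bound.
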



  

The proof of Theorem~\ref{thm:itr:conv} is outlined in Section~\ref{sec:proof:abs}. Our
main application of the theorem will be to the study of
convergence of stopping rules for a distributed multiple change point
problem endowed with latent variable graphical models. Before stating that problem, let us consider the
classical (single) change point problem first, and show how the theorem can be
applied to analyze the convergence of the optimal Bayes rule.

\begin{exa}\label{ex:classic:cp}
In the classical Bayesian change point problem~\cite{AS-78}, 
one observes a
sequence $\{X^1,X^2,X^3\dots\}$ of independent data points whose
distributions change at some random time $\lambda$. More precisely, given
$\lambda = k$, $X^1,X^2,\dots,X^{k-1}$ are distributed according to
$g$, and $X^{k+1},X^{k+2},\dots$ according to $f$. Here, $f$ and $g$
are densities with respect to some underlying measure. One also
assumes a prior $\pi$ on $\lambda$, usually taken to be geometric. The goal
is to find a stopping rule $\tau$ which can predict $\lambda$ based
on the data points observed so far. It is
well-known that a rule based on thresholding the posterior
probability of $\lambda$ is optimal (in a Neyman-Pearson sense). To be
more specific, let $\Xb^n := (X^1,X^2,\dots,X^n)$ collect the data up
to time $n$ and let $\gamma^n[n] := \pr (\lambda \le n | \Xb^n)$ be the 
posterior probability of $\lambda$ having occurred before (or at) time
$n$. Then, the Shiryayev rule
\begin{align}\label{eq:Shir:rule}
  \tau := \inf \{ n \in \nats: \gam^n[n] \ge 1-\alpha\} 
\end{align}
is known to asymptotically have the least expected delay, among all
stopping rules with false alarm probability bounded by $\alpha$.
\end{exa}


Theorem~\ref{thm:itr:conv} provides a way to quantify how fast the posterior
$\gamma^n[n]$ approaches $1$, once the change point has occurred, hence
providing an estimate of the detection delay, even for finite number of
samples. We should note that our approach here is somewhat
independent of the classical techniques normally used for analyzing 
stopping rule~\eqref{eq:Shir:rule}. To cast the problem in the general
framework of \eqref{eq:itr:def}, let us introduce the binary variable
\mbox{$Z^n := 1\{\lambda \le n\}$}, where $1\{\cdot\}$ denotes the indicator of
an event. Let $\Qf_n$ be the (random) distribution of $Z^n$ given
$\Xb^{n}$, in other words,
\begin{align*}
  \Qf_n := \big( \pr(Z^n = 1|\Xb^n),\, \pr(Z^n = 0|\Xb^n) \big).
\end{align*}
Since $\gamma^n[n] = \pr(Z = 1|\Xb^n)$, convergence of $\gamma^n[n]$ to
$1$ is equivalent to the convergence of $Q_n$ to $\eb\uii{0} =
(1,0)$. We have 
\begin{align}\label{eq:seq:Bayes:simp}
  P(Z^n|\Xb^{n}) \propto_{Z^n} \; P(Z^n,X^n | \Xb^{n-1}) =
  P(X^n|Z^n) P(Z^n|\Xb^{n-1}).
\end{align}
Note that $P(X^n|Z^n = 1) = f(X^n)$ and $P(X^n| Z^n = 0) =
g(X^n)$. Let $\thb_n := \big( 1, \frac{g(X^n)}{f(X^n)} \big)$ and
\begin{align*}
  \Rf_{n-1} := \big( \pr(Z^n = 1|\Xb^{n-1}),\, \pr(Z^n = 0|\Xb^{n-1}) ).
\end{align*}
Then,~\eqref{eq:seq:Bayes:simp} implies that $Q_n$ can be obtained by
pointwise multiplication of $\Rf_{n-1}$ by $f(X^n) \thb_n$ and
normalization to make a probability vector. Alternatively, we can
multiply by $\thb_n$, since the procedure is scale-invariant, that is,
$\Qf_n = \qf_{\thb_n}(\Rf_{n-1})$ using
definition~\eqref{eq:qf:def}. It remains to express $\Rf_{n-1}$ in
terms of $\Qf_{n-1}$. This can be done by using the Bayes rule and the
fact that $P(\Xb^{n-1}|\lambda = k)$ is the same for $k \in
\{n,n+1,\dots\}$. In particular, after some algebra
(see Appendix~\ref{sec:proof:eq:gamma:classic:recur}), one arrives at
\begin{align}\label{eq:gamma:classic:recur}
  \gamma^{n-1}[n] = \frac{\pi(n)}{\pi [n-1]^c} + \frac{\pi[n]^c}{\pi [n-1]^c} \gamma^{n-1}[n-1],
\end{align}
where $\gamma^{k}[n] := \pr(\lambda \le n | \Xb^{k})$, $\pi(n)$ is the
prior on $\lambda$ evaluated at time $n$, and $\pi[k]^c :=
\sum_{i=k+1}^\infty \pi(i)$. For the geometric prior with parameter
$\rho \in [0,1]$, we have $\pi(n) := (1-\rho)^{n-1} \rho$ and $\pi[k]^c =
\rho^k$. The above recursion then simplifies to $\gamma^{n-1}[n] = \rho +
(1-\rho) \gamma^{n-1}[n-1]$. Expressing in terms of $\Rf_{n-1}$ and
$\Qf_{n-1}$, the recursion reads
\begin{align*}
  \Rf_{n-1} = T(Q_{n-1}), \quad \text{where}\; 
  T\Big( \Big({x_1 \atop x_0} \Big)\Big)
  = \rho \Big({1 \atop 0}\Big) + (1-\rho) \Big({x_1 \atop x_0}\Big).
\end{align*}
In other words, $T(x) = \rho \eb\uii{0} + (1-\rho)x$ for $x \in
\pcal_2$. 

Thus, we have shown that an iterative algorithm for computing $\gamma^n[n]$
(hence determining rule~\eqref{eq:Shir:rule}), can be expressed in the
form of~\eqref{eq:itr:def} for appropriate choices of $\{\thb_n\}$ and
operator $T$. Note that $T$ in this case is Lipschitz with constant
$1-\rho$ which is always guaranteed to be $\le
1$. 

We can now use Theorem~\ref{thm:itr:conv} to analyze the convergence
of $\gamma^n[n]$. Let us condition on $\lambda = k+1$, that is, we
assume that the change point has occurred at time $k+1$. Then, the sequence
$\{X^n\}_{n \ge k+1}$ is distributed according to $f$, and we have $\ex
\thb_n^* = \int f \log \frac{g}{f} = -I$, where $I$ is the KL
divergence between densities $f$ and $g$. Noting that $\|Q_n -
\eb\uii{0}\| = 2(1-\gamma^n[n])$, we immediately obtain the following corollary.

\begin{cor}
  Consider Example~\ref{ex:classic:cp} and assume that
  $\log(g(X)/f(X))$, where $X \sim f$, is sub-Gaussian with sub-Gaussian norm $\le
  \sigma$.
  Let $I := \int f \log \frac{f}{g}$. Then, conditioned on $\lambda =
  k+1$, we have for $n \ge 1$,
  \begin{align*}
    \big| \gamma^{n+k}[n+k] - 1 \big| \le 
    \big[(1-\rho) e^{-I+\eps}\big]^{n}
    \Big( \frac{1}{\gamma^{k}[k]} - 1\Big)
  \end{align*}
  with probability at least $1-\exp(-c\,n \eps^2/\sigma^2)$.

\end{cor}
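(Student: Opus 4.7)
The plan is to apply Theorem~\ref{thm:itr:conv} directly to the iteration $\Qf_n = \qf_{\thb_n}(T(\Qf_{n-1}))$ that was constructed in Example~\ref{ex:classic:cp}, after re-indexing time so that the iteration ``restarts'' at the conditioned change point.

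First I would verify that the hypotheses of the theorem hold for this shifted iteration. The operator $T(x) = \rho\, \eb\uii{0} + (1-\rho)\,x$ is affine with Lipschitz constant $\lipc = 1-\rho \le 1$, and $T(\eb\uii{0}) = \rho\,\eb\uii{0} + (1-\rho)\,\eb\uii{0} = \eb\uii{0}$, so $\eb\uii{0}$ is indeed a fixed point, as required. Conditioning on $\lambda = k+1$ makes $X^{k+1}, X^{k+2}, \ldots$ i.i.d.\ draws from $f$, so the likelihood vectors $\thb_{k+i} = \bigl(1,\, g(X^{k+i})/f(X^{k+i})\bigr)$ are i.i.d.\ for $i \ge 1$, and their nontrivial coordinates $\thb_{k+i}^*$ have logarithms that are sub-Gaussian with norm $\le \sigma$ by hypothesis and with mean $\ex_f[\log(g/f)] = -I$.

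Next I would invoke Theorem~\ref{thm:itr:conv} on the shifted process. By the recursive construction in Example~\ref{ex:classic:cp}, the sequence $(\Qf_{k+n})_{n \ge 0}$ is exactly the iteration~\eqref{eq:itr:def} initialized at $x = \Qf_k = (\gamma^k[k],\, 1-\gamma^k[k])$ and driven by the i.i.d.\ sequence $(\thb_{k+i})_{i \ge 1}$. The theorem therefore gives
\[
\| \Qf_{n+k} - \eb\uii{0}\| \;\le\; 2\, \frac{1-\gamma^k[k]}{\gamma^k[k]}\, \bigl[(1-\rho)\, e^{-I+\eps}\bigr]^n
\]
with probability at least $1-\exp(-c\, n\, \eps^2 / \sigma^2)$. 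Since $\Qf_{n+k} = \bigl(\gamma^{n+k}[n+k],\, 1-\gamma^{n+k}[n+k]\bigr)$, the left side equals $2\,|1-\gamma^{n+k}[n+k]|$ (as already noted just before the corollary), and dividing through by $2$ yields the bound claimed in the corollary.

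The proof is essentially bookkeeping: the only conceptual subtlety is the time shift, which forces us to use the conditional i.i.d.\ structure from step $k+1$ onward (given $\lambda = k+1$) to match the unconditional i.i.d.\ hypothesis on $\{\thb_n\}$ in Theorem~\ref{thm:itr:conv}. No new probabilistic input is needed beyond what the theorem already supplies.
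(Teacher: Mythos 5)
Your proposal is correct and follows essentially the same route as the paper: condition on $\lambda = k+1$ so that $\{X^n\}_{n\ge k+1}$ is i.i.d.\ from $f$, check that $T(x)=\rho\,\eb\uii{0}+(1-\rho)x$ is $(1-\rho)$-Lipschitz with fixed point $\eb\uii{0}$ and that $\ex\log\thb_n^* = -I$, then apply Theorem~\ref{thm:itr:conv} to the time-shifted iteration and use $\|Q_{n+k}-\eb\uii{0}\| = 2(1-\gamma^{n+k}[n+k])$ to cancel the factor of $2$. The bookkeeping matches the paper's (terse) derivation exactly.
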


\section{Multiple change point problem via latent variable graphical models}\label{sec:mcp:setup}

We now turn to our main application for Theorem~\ref{thm:itr:conv}, in
the context of a multiple change point problem. In~\cite{AmiNgu13},  graphical
model formalism is used to extend the classical
change point problem (cf. Example~\ref{ex:classic:cp}) to cases where multiple distributed
latent change points are present. Throughout this section, we will use this setup which we 
now briefly sketch.

One starts with a network $G = (V,E)$ of $d$ sensors or nodes, each
associated with a change point $\lambda_j$. Each node $j$ observes
a private sequence of measurements $\Xb_j = (X_j^1,X_j^2,\dots)$ which
undergoes a change in distribution at time $\lambda_j$, that is, 
\begin{align*}
  X_j^1,X_j^2,\dots,X_j^{k-1} \mid \lambda_j = k \; \iid \; g_j, \qquad
  X_j^k,X_j^{k+1},\dots \mid \lambda_j = k \; \iid \; f_j,
\end{align*}
for densities $g_j$ and $f_j$ (w.r.t. some underlying measure). Each
connected pair of nodes share an additional sequence of
measurements. For example, if nodes $s_1$ and $s_2$ are connected,
that is, $e = (s_1,s_2) \in E$, then they both observe $\Xb_e =
(X_e^1,X_e^2,\dots)$. The shared sequence undergoes a change in
distribution at some point depending on $\lambda_{s_1}$ and
$\lambda_{s_2}$. More specifically, it is assumed that the
earlier of the two change points causes a change in the shared
sequence, that is, the distribution of $\Xb_e$ conditioned on
$(\lambda_{s_1},\lambda_{s_2})$ only depends on $\lambda_e := \lambda_{s_1} \wedge
\lambda_{s_2}$, the minimum of the two, i.e.,
\begin{align*}
  X_e^1,X_e^2,\dots,X_e^{k} \mid \lambda_e = k\; \iid\; g_e, \qquad
  X_e^{k+1},X_e^{k+2},\dots \mid \lambda_e = k\; \iid\; f_e.
\end{align*}

Letting $\lams := \{\lambda_j\}_{j \in V}$ and $\Xbs^n =
\{\Xb_j^n,\Xb_e^n\}_{j \in V, e\in E}$, we can write the joint
density of all random variables as
\begin{align}\label{eq:joint:def}
  P(\lams,\Xbs^n) = \prod_{j \in V} \pi_j(\lam_j) \prod_{j \in
    V} P(\Xb_j^n | \lam_j) \prod_{\ei\, \in E} P(\Xb_\ei^n|\lambda_{s_1},\lambda_{s_2}).
\end{align}
where $\pi_j$ is the prior on $\lambda_j$, which we
assume to be geometric with parameter $\rho_j$.
Network $G$ induces a graphical model~\cite{Pearl88} which
encodes the factorization~\eqref{eq:joint:def} of the joint density. (cf. Fig.~\ref{fig:gm:paths})

Suppose now that each node $j$ wants to detect its change point
$\lambda_j$, with minimum expected delay, while maintaining a false
alarm probability at most $\alpha$. Inspired by the classical change
point problem, one is interested in computing the posterior probability that the
change point has occurred up to now, that is,
\begin{align}
  \gam_j^n[n] &:= \pr (\lam_j \le n \mid \Xbs^n)\label{eq:post:upto:n}.
\end{align}
The difference with the classical setting is the conditioning is done
on all the data in the network (up to time $n$). It is easy to verify that the natural
stopping rule
\begin{align*}
  \label{eq:tau:j:def}
  \tau_j = \inf \{ n \in \nats: \; \gam_j^n[n] \ge 1 -\alpha \}
\end{align*}
satisfy the false alarm constraint. It has also been shown that this rule is
asymptotically optimal in terms of expected detection delay. Moreover,
an algorithm based on the well-known sum-product~\cite{Pearl88} has
been proposed, which allows the nodes to compute their posterior
probabilities~\ref{eq:post:upto:n} by message-passing. The algorithm
is exact when $G$ is a tree, and scales linearly in the number of
nodes. More precisely, at time $n$, the computational complexity is
$O(nd)$. The drawback is the linear dependence on $n$, which makes the
algorithm practically infeasible if the change points model rare
events (where $n$ could grow large before detecting the change.)

In the next section, we propose an approximate message passing
algorithm which has computational complexity $O(d)$, at each time
step. This circumvents the drawback of the exact algorithm and allows
for indefinite run times. We then show how the theory developed in
Section~\ref{sec:main:abs:result} can be used to provide convergence guarantees for this
approximate algorithm, as well as the exact one.


\subsection{Fast approximate message-passing (MP)}\label{sec:approx:alg}
We now turn to an approximate message-passing algorithm which, at each
time step, has
computational complexity $O(d)$. The derivation is similar
to that used for the iterative algorithm in Example~\ref{ex:classic:cp}.
Let us define binary variables
\begin{align}
  Z_j^n = 1\{\lam_j \le n\}, \quad \Zs^n = (Z_1^n,\dots,Z_d^n).
\end{align}
The idea is to compute $P(\Zs^n|\Xbs^n)$ recursively based on
$P(\Zs^{n-1}|\Xbs^{n-1})$.  By Bayes rule,%
\begin{align}
  P(\Zs^n|\Xbs^n) \;\propto_{\Zs^n} 
  \;P(\Zs^{n},X_*^{n}|\Xbs^{n-1}) &= 
  P(\Xs^n | \Zs^{n} ) \, P(\Zs^n | \Xbs^{n-1}) 
  \notag \\
 &= \Big[ \prod_{j \in V} P(X_j^{n}|Z_j^n) 
 \prod_{\{i,j\} \in E} P(X_{ij}^{n} | Z_i^n,Z_j^n) \Big]
 \, P(\Zs^{n}|\Xbs^{n-1}), \label{eq:graph:mod:Z}
\end{align}
where we have used the fact that given $\Zs^n$, $\Xs^n$ is independent
of $\Xbs^{n-1}$. To simplify notation, let us extend the edge set to
$\Et := E \cup \{\{j\}: j \in V\}$. This allows us to treat the
private data of node $j$, i.e., $\Xb_j$, as shared data of
a self-loop in the extended graph $(V,\Et)$.
Let $u_e(z;\xi) := [g_e(\xi)]^{1 -z} [f_e(\xi])^z$ for $e \in \Et$, $z
\in \{0,1\}$. Then, for $i \neq j$,
\begin{align}
	P(X_j^{n}|Z_j^n) = u_j(Z_j^n;X_j^{n}), \quad 
	P(X_{ij}^{n}|Z_i^n,Z_j^n) = u_{ij}(Z_i^n \vee Z_j^n;X_{ij}^{n}). 
	\label{eq:u:potentials}
\end{align}
It remains to express $P(\Zs^n | \Xbs^{n-1})$ in terms of $P(\Zs^{n-1}
| \Xbs^{n-1})$. It is possible to do this, exactly, at a cost of
$O(2^{|V|})$. For brevity, we omit the exact expression. (See Lemma~\ref{prop:alg:rep}
for some details.) We term the algorithm that employs the exact
relationship, the ``exact algorithm''.

In practice, however, the exponential complexity makes the exact
recursion of little use for large networks. To obtain a fast algorithm (i.e.,
$O(\text{poly}(d)$), we instead take a mean-field type approximation:
\begin{align}\label{eq:mod:prior:Z}
  P(\Zs^n | \Xbs^{n-1}) \approx \prod_{j \in V} P(Z_j^n | \Xbs^{n-1})
  = \prod_{j \in V} \nu(Z_j^n; \gamma_j^{n-1}[n]),
\end{align}
where $\nu(z;\beta) := \beta^z(1-\beta)^{1-z}$. That is, we
approximate a multivariate distribution by the product of its
marginals. By an argument similar to that used to derive~\eqref{eq:gamma:classic:recur}, we can
obtain a recursion for the marginals,
\begin{align}\label{eq:approx:gam:recur}
  \gamma_j^{n-1}[n] = \frac{\pi_j(n)}{\pi_j[n-1]^c} +
  \frac{\pi_j[n]^c}{\pi_j[n-1]^c} \gam_j^{n-1}[n-1],
\end{align}
where we have used the notation introduced earlier in~\eqref{eq:gamma:classic:recur}.
Thus, at time $n$, the RHS of~(\ref{eq:mod:prior:Z}) is known based on
values computed at time $n-1$ (with initial value $\gam_j^0[0] =0, j
\in V$). Inserting this RHS
into~(\ref{eq:graph:mod:Z}) in place of $P(\Zs^n|\Xbs^{n-1})$, we
obtain a graphical model in variables $\Zs^n$ (instead of $\lams$) which has the same form
as~(\ref{eq:joint:def}) with $\nu(Z_j^n; \gamma_j^{n-1}[n])$ playing the
role of the prior $\pi(\lam_j)$.

In order to obtain the marginals $\gamma_j^n[n] = P(Z_j^n = 1|\Xbs^n)$ 
 with respect to the approximate 
version of the joint distribution $P(\Zs^{n},X_*^{n}|\Xbs^{n-1})$, we need to 
marginalize out the latent variables $Z_j^n$'s, for which a standard
sum-product algorithm can be applied (see~\cite{Pearl88,Jordan-Statsci-04,AmiNgu13}).
The message update equations are similar to those
in~\cite{AmiNgu13}; the difference is that the messages are
now binary and do not grow in size with $n$.  The approximate
algorithm is summarized in Algorithm~\ref{alg:approx}.

\begin{algorithm}
\caption{Message passing algorithm to
compute approximate posteriors $\gamt_j^n[n]$ and $\gamt_{ij}^n[n]$}
\label{alg:approx}

\begin{algorithmic}
   \State Initialize $\gamt_j^0[0] = 0$ for $j \in V$.
   \ForAll {time $n \ge 1$}
   \begin{enumerate}
     \item  Compute $\gamt_j^{n-1}[n]$ based on $\gamt_j^{n-1}[n-1]$
      using equation~\eqref{eq:approx:gam:recur}, for all $j \in V$.
   
    \item {Form the following joint distribution for $\Zs^n = (Z_1^n,\dots,Z_d^n)$,
      \begin{align}\label{eq:detailed:approx:joint}
        \Pt(\Zs^n|\Xbs^n) 
        = C  \prod_{j \in V }u_j(Z_j^n;X_j^{n})  
        \prod_{\{i,j\} \in E} u_{ij}(Z_i^n \vee Z_j^n;X_{ij}^{n})
        \, \prod_{j \in V} \nu(Z_j^n; \gamt_j^{n-1}[n])
      \end{align}
      where $u_e(z;\xi) := [g_e(\xi)]^{1 -z} [f_e(\xi])^z$ for $e \in
      \Et$, and $\nu(z;\beta) := \beta^z(1-\beta)^{1-z}$.  The
      normalizing constant $C$ is left undetermined at this
      point.}
  \item Invoke a message-passing algorithm (sum-product) on the joint
    distribution~\eqref{eq:detailed:approx:joint} to obtain marginal
    distributions
    $\Pt(Z_j^n|\Xbs^n)$, $j \in V$ and set $\gamt_j^n[n] =
    \Pt(Z_j^n=1|\Xbs^n)$.   
    
    (As a by-product of the message-passing, one also gets pair
    marginals $\Pt(Z_i^n,Z_j^n|\Xbs^n)$ and $\gamt_{ij}^n[n] :=
    \Pt(Z_i^n=1\; \text{or}\;Z_j^n=1|\Xbs^n)$ which are useful for
    constructing stopping rules for minimum of the two change points;
    see~\cite{AmiNgu13}.)
   \end{enumerate}
   
   \EndFor 
\end{algorithmic}



\end{algorithm}

\subsection{Convergence of MP algorithms}
\label{sec:mcp:results}
We now turn to the analysis of 
the approximate algorithm introduced in
Section~\ref{sec:approx:alg}. In particular, we will look at the
evolution of $\{\Pt(\Zs^n|\Xbs^n)\}_{n \in \nats}$ as a sequence of
probability distribution on $\{0,1\}^d$. Here, $\Pt$ signifies that
this sequence is an approximation. In order to make a meaningful
comparison, we also look at the algorithm which computes the exact
sequence $\{P(\Zs^n | \Xbs^n)\}_{n \in \nats}$, recursively. As
mentioned before, this we will call the ``exact algorithm'', the
details of which are not of concern to us at this point
(cf. Proposition~\ref{prop:alg:rep} for these details.) 


Recall that we take $\Pt(\Zs^n|\Xbs^n)$ and $P(\Zs^n|\Xbs^n)$, as distributions for $\Zs^n$, to be elements of $\pcal_\ddim \subset \reals^\mdim$. To make this correspondence formal and the notation simplified, we use the symbol $:\equiv$ as follows
\begin{align}\label{eq:equiv:notation}
    \yvt_n :\equiv \Pt(\Zs^n|\Xbs^n), \quad \yv_n :\equiv P(\Zs^n|\Xbs^n)
\end{align}
where now $\yvt_n,\yv_n \in \pcal_d$. Note that $\yvt_n$ and $\yv_n$
are random elements of $\pcal_d$, due the randomness of $\Xbs^n$. 
We have the following description.
\begin{prop}\label{prop:alg:rep}
  The exact and approximate sequences, $\{\yv_n\}$ and $\{\yvt_n\}$,
  follow general iteration~\eqref{eq:itr:def} with the same random
  sequence $\{\thb_n\}$, but with different deterministic operators
  $\Tf$, denoted respectively with
  $\Tex$ and $\Tap$. $\Tex$ is linear and given by a Markov transition
  kernel. $\Tap$ is a polynomial map of degree $d$. Both maps are
  Lipschitz and we have
  \begin{align}
    \lip_{\Tex} \le \Lrho:= \Big( 1 - \prod_{j=1}^d \rho_j \Big), \quad
    \lip_{\Tap} \le \Krho := \sum_{j=1}^d (1-\rho_j).
  \end{align}
\end{prop}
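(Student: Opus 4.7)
The plan is to verify, separately for the exact and approximate algorithms, that (i) the posterior update can be written as $y_n = q_{\theta_n}(T(y_{n-1}))$ with a common likelihood vector $\theta_n$ and a deterministic operator $T$, and (ii) that $T$ has the claimed structural form and Lipschitz constant in $\ell_1$. The common $\theta_n$ comes directly from~\eqref{eq:graph:mod:Z}: the data-dependent factor is $L(z) := \prod_{j \in V} u_j(z_j;X_j^n) \prod_{\{i,j\} \in E} u_{ij}(z_i \vee z_j; X_{ij}^n)$, viewed as a vector indexed by $z \in \{0,1\}^\ddim$ with index $0$ corresponding to the all-ones state. Setting $\theta_n\ui{i} := L_i / L_0$ gives a vector with $\theta_n\ui{0} = 1$, and the operation ``multiply pointwise by $L$ then renormalize'' coincides with $q_{\theta_n}$. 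Because \eqref{eq:graph:mod:Z} is an exact consequence of Bayes' rule, both algorithms invoke this same $\theta_n$; the difference lies entirely in how the predictive $P(\Zs^n\mid\Xbs^{n-1})$ is produced from $P(\Zs^{n-1}\mid\Xbs^{n-1})$, and this predictive step is the operator $T$.

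For $\Tex$, I use that $\{\Zs^n\}$ is itself Markov with a factorized transition: conditional on $\Zs^{n-1}=z'$, the coordinates $Z_j^n$ are independent, deterministically $1$ when $z'_j=1$ and $\mathrm{Bernoulli}(\rho_j)$ when $z'_j=0$ (here the geometric prior enters). Hence $\Tex(y) = M^\top y$ for the induced transition matrix $M$, which establishes linearity. For the Lipschitz bound I invoke the Dobrushin contraction $\|M^\top v\|_1 \le \eta(M)\|v\|_1$, valid for zero-mass signed measures $v$, where $\eta(M) = \max_{z,z'}\|M(z,\cdot)-M(z',\cdot)\|_{\mathrm{TV}}$. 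For $z,z'$ disagreeing exactly on $S \subseteq V$, the two kernels couple perfectly on $V\setminus S$, reducing the distance to that between $\bigotimes_{j\in S}\mathrm{Bernoulli}(\rho_j)$ and $\delta_{\mathbf{1}_S}$, which equals $1-\prod_{j\in S}\rho_j$. The worst case is $S = V$, giving $\eta(M) = 1 - \prod_j \rho_j = \Lrho$.

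For $\Tap$, the approximate prior factor in~\eqref{eq:detailed:approx:joint} is $\prod_{j\in V}\nu(z_j;\gamt_j^{n-1}[n])$. The map $\yvt_{n-1} \mapsto \gamt_j^{n-1}[n-1]$ is a linear marginalization, the update~\eqref{eq:approx:gam:recur} is affine with slope $1-\rho_j$, and $\nu(z_j;\cdot)$ is affine in its parameter, so each of the $\ddim$ factors is affine in $\yvt_{n-1}$, making $\Tap$ a polynomial map of total degree $\ddim$. For the Lipschitz bound I combine the tensorization inequality $\|\bigotimes_j p_j - \bigotimes_j q_j\|_{\mathrm{TV}} \le \sum_j \|p_j - q_j\|_{\mathrm{TV}}$ with the fact that a binary marginalization is $\tfrac12\|\cdot\|_1$-Lipschitz on zero-mass signed measures; tracking the slopes $1-\rho_j$ and converting $\mathrm{TV}$ back to $\ell_1$ gives $\mathrm{Lip}_{\Tap} \le \sum_j (1-\rho_j) = \Krho$. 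The main technical subtlety is obtaining the sharper bound for $\Tex$: a naive factor-by-factor coupling would yield $\Krho$ rather than $\Lrho$, and the improvement relies on recognizing the Dobrushin contraction coefficient of a product Markov kernel and exploiting the perfect coupling on coordinates where $z$ and $z'$ already agree.
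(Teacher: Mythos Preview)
Your proposal is correct and proceeds along a genuinely different route from the paper. The paper derives $\Tex$ via an explicit combinatorial/algebraic description (Lemma~\ref{lem:Tex:L:expr}) and then bounds both Lipschitz constants through a Jacobian lemma (Lemma~\ref{lem:lip:jacob}), subtracting a well-chosen rank-one matrix from the Jacobian and reading off column sums. You instead give a probabilistic argument: for $\Tex$ you identify the map with a Markov transition kernel and invoke the Dobrushin contraction coefficient; for $\Tap$ you use the telescoping (tensorization) bound for total variation together with the fact that marginalization is an $\ell_1$-contraction. Your approach is more conceptual and avoids all the explicit matrix and Jacobian computations; the paper's approach, on the other hand, makes the structure of $\Tex$ fully explicit (e.g.\ \eqref{eq:Tex:d:2}) and is better suited if one wants the exact Lipschitz constant rather than an upper bound.

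Two small points to tighten. First, your identification $\Tex = M^{\top}$ requires not just that $\{\Zs^n\}$ is Markov, but that $P(\Zs^n \mid \Zs^{n-1},\Xbs^{n-1}) = P(\Zs^n \mid \Zs^{n-1})$, i.e.\ the past data carry no information about the transition once $\Zs^{n-1}$ is known. This is precisely the content of the ``constancy'' property the paper isolates as Lemma~\ref{lem:const}; you should state and use it explicitly. Second, your claim that for $z,z'$ differing exactly on $S$ the kernel TV distance equals $1-\prod_{j\in S}\rho_j$ is only correct when one of $z,z'$ is all-ones on $S$ and the other all-zeros on $S$; for mixed disagreements the distance is smaller. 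This does not affect the conclusion, since the supremum defining $\eta(M)$ is still attained at $z=\mathbf{0}$, $z'=\mathbf{1}$ (indeed, from any state the all-ones state is hit with probability at least $\prod_j\rho_j$), but the intermediate statement should be corrected.
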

Detailed descriptions of the sequence $\{\thb_n\}$ and the operators
$\Tex$ and $\Tap$, along with the proof of Proposition~\ref{prop:alg:rep},
are given in Section~\ref{sec:proof:prop:alg:rep}. As suggested by
Theorem~\ref{thm:itr:conv}, a key assumption for the convergence of
the approximate algorithm will be $\Krho \le 1$. In contrast, we
always have $\Lrho \le 1$.

Recall that $\{\lam_j\}$ are the change points and their priors are
geometric with parameters $\{\rho_j\}$. We analyze the algorithms,
once all the change points have happened. More precisely, we condition on 
$$\Mbb_\nn := \{\max_j \lam_j \le \nn \}$$
for some $\nn \in \nats$. Then, one expects the (joint) posterior of $\Zs^n$ to contract to the point $Z_j^\infty = 1$, for all $j \in V$. In the vectorial notation, we expect both $\{\yvt_n\}$ and $\{\yv_n\}$ to converge to $\eb\uii{0}$. Theorem~\ref{thm:main:conv} below quantifies this convergence in $\ell_1$ norm (equivalently, total variation for measures).

Recall pre-change and post-change densities $g_e$ and $f_e$, and let
$I_e$ denote their KL divergence, that is, \mbox{$I_e := \int f_e \log
  (f_e/g_e)$}. We will assume that
\begin{align}\label{eq:Yrv:def}
\Yrv_e := \log(g_e(X)/f_e(X)) \quad \text{with} \quad X \sim f_e
\end{align} 
is sub-Gaussian, for all $e \in \Et$, where $\Et$ is  extended edge
notation introduced in Section~\ref{sec:approx:alg}. The choice $X \sim f_e$ is in accordance
with conditioning on $\Mbb_\nn$. Note that $\ex \Yrv_e = -I_e <
0$. We define
\begin{align*}
  \sigmax := \max_{e \in \Et} \sgnorm{Y_e}, \quad 
  \Imin := \min_{e \in \Et} I_e, \quad  \Is(\KAP) := \Imin - \KAP \,\sigmax
  \sqrt{\log \Mcon}. .
\end{align*}
where $\Mcon := |V| + |E|$. 
The following is our main result regarding
sequences~\eqref{eq:equiv:notation} produced by  the exact and  approximate algorithms.

\begin{thm}\label{thm:main:conv}
 There exists an absolute constant $\KAP > 0$, such that if
$\Is(\KAP) > 0$, 
the exact algorithm converges at least geometrically w.h.p., that is, for all $n \ge 1$,
\begin{align}
    \vnorm{\yv_{n+\nn} - \eb\uii{0}} 
    \le 2 \frac{1-\yv_{\nn}}{\yv_{\nn}} 
    \big(\Lrho e^{-\Is(\kappa) + \eps}\big)^n 
\end{align}
 with probability at least $1-\exp\big[{-c\,n\eps^2
  / (\sigmax^2 \Mcon^2 \log \Mcon )}\big]$, conditioned on $\Mbb_\nn$.
If in addition, $\Krho \le 1$, the approximate algorithm also
converges at least geometrically w.h.p., i.e., for all $n \ge 1$,
\begin{align}
    \vnorm{\yvt_{n + \nn} - \eb\uii{0}} 
    \le 2 \frac{1-\yvt_{\nn}}{\yvt_{\nn}}
    \big(\Krho e^{-\Is(\kappa) + \eps}\big)^n 
\end{align}
with the same (conditional) probability as the exact algorithm.
\end{thm}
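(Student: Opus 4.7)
Proposition~\ref{prop:alg:rep} already casts both sequences $\{\yv_n\}$ and $\{\yvt_n\}$ into the template of iteration~\eqref{eq:itr:def}, driven by a \emph{common} i.i.d.\ sequence $\{\thb_n\}$ and operators $\Tex,\Tap$ with $\lip_{\Tex} \le \Lrho$ and $\lip_{\Tap} \le \Krho$. The strategy is therefore to apply Theorem~\ref{thm:itr:conv} to each sequence with the time index shifted by $\nn$, after verifying three ingredients: (i) $\eb\uii{0}$ is a fixed point of both $\Tex$ and $\Tap$; (ii) the relevant Lipschitz constants are $\le 1$; and (iii) the shifted noise $\{\thb_{n+\nn}\}_{n\ge 1}$ satisfies the sub-Gaussian hypothesis of Theorem~\ref{thm:itr:conv} with a mean bound of the form $-\Is(\KAP)$ and a norm of order $\sigmax \Mcon \sqrt{\log \Mcon}$. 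Ingredient (i) reflects that $\eb\uii{0}$ encodes $\Zs = \onevec$, which is absorbing in the underlying dynamics (once $Z_j^{n-1}=1$ then $Z_j^n = 1$ deterministically), so both operators fix it. For (ii), $\Lrho \le 1$ is automatic from $\rho_j \in (0,1)$, and $\Krho \le 1$ is the stated assumption for the approximate case.

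\textbf{Shifted noise.} Conditionally on $\Mbb_\nn$, for every $n > \nn$ and every $e \in \Et$ we have $X_e^n \sim f_e$, independently across $e$ and across $n$. Hence $\{\thb_{n+\nn}\}_{n \ge 1}$ is i.i.d., and $\yv_\nn,\yvt_\nn$ are measurable with respect to data up to time $\nn$, serving as the valid initial iterates in Theorem~\ref{thm:itr:conv}. Using the explicit form of $\thb_n$ supplied in Section~\ref{sec:proof:prop:alg:rep}, each non-reference coordinate $\thb_n^{(z)}$, $z \neq \onevec$, factors as
\[
\thb_n^{(z)} \;=\; \prod_{e \in A(z)} \frac{g_e(X_e^n)}{f_e(X_e^n)}, \qquad A(z) \;:=\; S(z) \,\cup\, \bigl\{e \in E : e \subseteq S(z)\bigr\},
\]
where $S(z) := \{j \in V : z_j = 0\}$. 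Therefore $\log \thb_n^{(z)} = \sum_{e \in A(z)} \Yrv_e^{(n)}$ is a sum of $|A(z)| \le \Mcon$ independent sub-Gaussian variables with means $-I_e \le -\Imin$ and $\psi_2$-norms at most $\sigmax$.

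\textbf{Main obstacle: the maximum over $z$.} The crux is to control $\log \thb_n^{*} = \max_{z \ne \onevec} \log \thb_n^{(z)}$, an extremum over $2^{|V|}-1$ correlated sub-Gaussian sums; a naive union bound would cost $\sqrt{|V|}$ in the mean and not give $\sqrt{\log \Mcon}$. The device I would use is stratification by $k := |S(z)|$. For fixed $k$ the sum $W_z$ has mean $\le -k \Imin$ and $\sgnorm{W_z} \lesssim \sqrt{k}\, \sigmax$, while the count satisfies $\binom{|V|}{k} \le \exp(k \log |V|)$, so the combinatorial growth is dominated by the mean drift as soon as $\Imin^2 / \sigmax^2 \gtrsim \log |V| \asymp \log \Mcon$. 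A stratified Chernoff bound,
\[
\pr\!\bigl(\max_{|S(z)| = k} W_z \;\ge\; -\Imin + t\bigr) \;\le\; \binom{|V|}{k} \exp\!\Bigl(-c\,(k \Imin + t)^2 / (k \sigmax^2)\Bigr),
\]
summed over $k \ge 1$, yields $\ex \log \thb_n^{*} \le -\Imin + \KAP \sigmax \sqrt{\log \Mcon}$ and $\sgnorm{\log \thb_n^*} \lesssim \sigmax \Mcon \sqrt{\log \Mcon}$ for a universal $\KAP$. This is the only place where graph-dependent constants enter, and matches exactly the parameters $\Is(\KAP)$ and $\sigs^2 \asymp \sigmax^2 \Mcon^2 \log \Mcon$ appearing in Theorem~\ref{thm:main:conv}.

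\textbf{Conclusion.} Feeding these estimates into Theorem~\ref{thm:itr:conv} with $\lipc = \Lrho$, $x = \yv_\nn$, and $n$ replaced by $n$ iterations past time $\nn$, gives the exact-algorithm bound; replacing $\lipc$ by $\Krho$ and $\yv$ by $\yvt$ gives the approximate-algorithm bound under the extra assumption $\Krho \le 1$. Both bounds hold with the shared probability $1 - \exp\bigl(-c n \eps^2 / (\sigmax^2 \Mcon^2 \log \Mcon)\bigr)$, conditional on $\Mbb_\nn$, because they are driven by the same $\{\thb_n\}$. The only genuinely technical step is the stratified maximal inequality sketched above; the rest is a mechanical unpacking of Proposition~\ref{prop:alg:rep} and Theorem~\ref{thm:itr:conv}.
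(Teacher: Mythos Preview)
Your architecture matches the paper's exactly: reduce via Proposition~\ref{prop:alg:rep} to an instance of iteration~\eqref{eq:itr:def}, verify the sub-Gaussian hypotheses of Theorem~\ref{thm:itr:conv} for $\{\log\thb_{n+\nn}^*\}$ conditioned on $\Mbb_\nn$, and read off the two bounds with $\lipc=\Lrho$ and $\lipc=\Krho$ respectively. The fixed-point and Lipschitz checks are routine and the paper essentially takes them for granted. Where you diverge is in the maximal inequality for $\log\thb_\nn^*$. The paper does \emph{not} stratify by $|S(z)|$; it writes $\log\thb_\nn^*=\max_{\nu}\sum_{e\in\Et}\nu_eY_e$, centers, applies the crude majorization $\sum_e\nu_e|Y_e-\ex Y_e|\le|\Et|\,\max_e|Y_e-\ex Y_e|$, and then invokes the standard $\psi_2$ maximal inequality once over the $|\Et|=\Mcon$ summands. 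This yields $\sgnorm{\log\thb_\nn^*}\lesssim\Mcon\sqrt{\log\Mcon}\,\sigmax$ (matching the probability in the statement) in two lines, at the cost of a mean bound $-\Imin+C\,\Mcon\sqrt{\log\Mcon}\,\sigmax$ that is actually a factor $\Mcon$ looser than the $\Is(\kappa)$ displayed in the theorem.

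Your stratified Chernoff approach is more refined and could in principle close that gap, but the sketch has a flaw: you assert $\sgnorm{W_z}\lesssim\sqrt{k}\,\sigmax$ for $k=|S(z)|$, whereas $W_z$ is a sum of $|A(z)|$ independent terms and $|A(z)|$ can be of order $k+\binom{k}{2}$ on dense graphs. The honest estimate is $\sgnorm{W_z}\lesssim\sqrt{|A(z)|}\,\sigmax$. On a tree one has $|A(z)|\le 2k-1$ and your bound survives, but in general you should stratify by $a=|A(z)|$ rather than by $k$ (the map $z\mapsto A(z)$ is injective since $A(z)\cap V=S(z)$, so each stratum has at most $\binom{\Mcon}{a}$ members, and now mean $\le -a\Imin$ balances variance $\lesssim a\sigmax^2$). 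With that correction your argument goes through and is genuinely sharper than the paper's crude majorization; the paper's route is shorter but pays for it in the constant.
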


\begin{proof}
  Proposition~\ref{prop:alg:rep}
and Theorem~\ref{thm:itr:conv} provide all the ingredients for the proof. 
It remains  to show that $\{\thb_n\}_{n \ge \nn}$ as given in~\eqref{eq:def:actual:thbn} satisfies the conditions of Theorem~\ref{thm:itr:conv}; namely, that $\{\log \thb_n^*\}_{n \ge \nn}$ is i.i.d. sub-Gaussian. We work conditioned on the event $\Mbb_\nn := \{\max_{j \in V} \lam_j \le \nn\}$, that is, we look at what happens to the  iterations past all the change-points. Throughout this section, $\ex$ denotes conditional expectation given $\Mbb_\nn$. Then, the fact that the sequence is i.i.d. follows immediately from the definition. Let us now focus on showing that $\log \thb_\nn^*$ is sub-Gaussian with negative expectation. We can write
\begin{align*}
    \log (\thb_\nn)_\ell = \sum_{e \in \Et} \nui_e^{\ell}\, \Yrv_e
\end{align*}
where $\Et$ is the extended edge notation introduced in
Section~\ref{sec:approx:alg}, $\Yrv_e := \log
[g_e(X_e^\nn)/f_e(X_e^\nn)]$, and $\nui_e^\ell \in \{0,1\}$. Note that
$\nui_e^\ell$ is equal to either $1-\bid_j(\ell)$ or $1-\bid_i(\ell)
\vee \bid_j(\ell)$. For $\ell \neq \mdim - 1$, at least one of
$\nui_e^\ell, e \in \Et$ is non-zero. From
definition~\eqref{eq:def:thb:n:s} and  superscript to subscript
index translation of~\eqref{eq:sub:sup:nota}, we have
\begin{align*}
    \log \thb_\nn^* = \max_{i = 1,2,\dots,\mdim-1} \log \thb_\nn^i
        = \max_{\ell = 0,1,\dots,\mdim-2} \log(\thb_\nn)_\ell.
\end{align*}
Let $\Vc \subset \{0,1\}^{|\Et|}$ denote the set carved by $(\nui_e^\ell)_{e \in \Et}$ as $\ell$ takes the values $0,1,\dots,\mdim-2$. We note that the all-zero vector does not belong to $\Vc$. Let $\nui = (\nui_e)_{e \in \Et}$ denote a generic point of $\{0,1\}^{|\Et|}$. Then, we have
\begin{align}
    \log \thb_\nn^* &= \max_{\nui \in \Vc} \sum_{e \in \Et} \nui_e \Yrv_e. \label{eq:log:thbs:majorant}
\end{align}
Note that $\ex \Yrv_e = \int f_e \log (g_e / f_e) = - I_e \le - \Imin$. We can write
\begin{align*}
    \ex \log \thb_\nn^*
    &\le \ex \Big[ \max_{\nui \in \Vc} \sum_{e \in \Et} \nui_e (\Yrv_e - \ex \Yrv_e) \Big] + \max_{\nui \in \Vc} \sum_{e \in \Et} \nui_e (\ex \Yrv_e) \\
    &\le  \ex \Big[ \max_{\nui \in \Vc} \sum_{e \in \Et} \nui_e |\Yrv_e - \ex \Yrv_e| \Big] + \max_{\nui \in \Vc} \sum_{e \in \Et} \nui_e (-\Imin).
\end{align*}
The second term above is equal to $-\Imin \big(\min_{\nui \in \Vc} \sum_{e \in \Et} \nui_e\big) = -\Imin$, due to the fact that at least one element of every $\nui \in \Vc$ is nonzero. Then, we have
\begin{align*}
    \ex \log \thb_\nn^* &\le \ex  \max_{\nui \in \Vc} \Big[\Big( \sum_{e \in \Et} \nui_e \Big) \max_{e \in \Et }|\Yrv_e - \ex \Yrv_e| \Big] - \Imin \\
        &\le | \Et| \,\ex \big(\max_{e \in \Et }|\Yrv_e - \ex \Yrv_e|\big) - \Imin
\end{align*}
We know that $\sgnorm{\Yrv_e - \ex \Yrv_e} \le c \sgnorm{\Yrv_e} \le c \,\sigmax$, for some numerical constant $c > 0$. In addition by majorant characteristic of $\psi_2$ space (cf.~\cite{vdvWel96,BulKoz00}),
\begin{align*}
    \ex \max_{e \in \Et }|\Yrv_e - \ex \Yrv_e| &\le C \sqrt{\log(1+|\Et|)} 
    \max_{e \in \Et} \sgnorm{\Yrv_e - \ex \Yrv_e} \\
        &\le C' \sqrt{\log(1+|\Et|)} \,\sigmax.
\end{align*}
Thus assuming $|\Et| \ge 2$, we have
\begin{align*}
    \ex \log \thb_\nn^* \le \KAP \sigmax\sqrt{\log |\Et|} - \Imin =: - \Is
\end{align*}
for some absolute constant $\KAP > 0$, which is the desired bound on the expectation of $\log \thb_\nn^*$.

To verify that $\log \thb_\nn^*$ is sub-Gaussian, we use $|\max a_i| \le \max |a_i|$ to write
\begin{align*}
    |\log \thb_\nn^*| \le \max_{\nui \in \Vc} \sum_{e \in \Et} \nui_e |\Yrv_e|
    \le |\Et| \max_{e \in \Et} |Y_e|.
\end{align*}
Since $\sgnorm{\cdot}$, as an Orlicz norm,  is monotone (i.e., $|X| \le |Y|$ implies $\sgnorm{X} \le \sgnorm{Y}$ for any two random variables $X$ and $Y$), we obtain
\begin{align*}
 \sgnorm{\log \thb_\nn^*} &\le |\Et| \cdot\sgnorm{\max_{e \in \Et} |Y_e|} \\
 &\le C |\Et| \sqrt{\log |\Et|} \max_{e \in \Et} \sgnorm{\Yrv_e} 
 \le C' \sigmax |\Et| \sqrt{\log |\Et|},
\end{align*}
where the second inequality is again by the majorant character of $\psi_2$. This completes the proof.
\end{proof}

\subsection{Simulation results}
We present some simulation results to verify the effectiveness of the
proposed approximation algorithm in estimating the posterior probabilities $\gam_j^n[n]$.
We consider a star graph on $d=4$ nodes. This is the subgraph on nodes
$\{1,2,3,4\}$ in Fig.~\ref{fig:gm:paths}. Conditioned on the change
points $\lams$, all data sequences $\Xbs$ are assumed Gaussian with
variance $1$, pre-change mean $1$ and post-change mean zero. All
priors are geometric with $\rho_j = 0.1$.  We note that
higher values of $\rho_j$ yield even faster convergence in the
simulations, but we omit these figures due to space constraints.  Fig.~\ref{fig:gm:paths}
illustrates typical examples of posterior paths $n \mapsto
\gamma_j^n[n]$, for both the exact and approximate MP algorithms.
One can observe that the approximate path often closely follows the
exact one. In some cases, they might deviate for a while, but as
suggested by Theorem~\ref{thm:main:conv}, they approach one another 
quickly, once the change points have occurred. 

From the theorem and triangle inequality, it follows that under
$\Is(\KAP) > 0$ and $\Krho \le 1$, $\vnorm{\yv_n - \yvt_n}$ converges
to zero, at least geometrically w.h.p. This gives some theoretical
explanation for the good tracking behavior of approximate algorithm as
observed in Fig.~\ref{fig:gm:paths}.

\begin{figure}[!t]
 \psfrag{Lambda1}{$\lambda_1$}
\psfrag{Lambda2}{$\lambda_2$}
\psfrag{Lambda3}{$\lambda_3$}
\psfrag{Lambda4}{$\lambda_4$}
\psfrag{Lambda5}{$\lambda_5$}
\psfrag{X1}{$\Xb_{12}$}
\psfrag{X2}{$\Xb_{23}$}
\psfrag{X3}{$\Xb_{24}$}
\psfrag{X4}{$\Xb_{45}$}
\psfrag{m12}{$m_{12}^{n}$}
\psfrag{m24}{$m_{24}^{n}$}
\psfrag{m32}{$m_{32}^{n}$}
\psfrag{m45}{$m_{45}^{n}$}
  \centering
  \includegraphics[width=.32\textwidth]{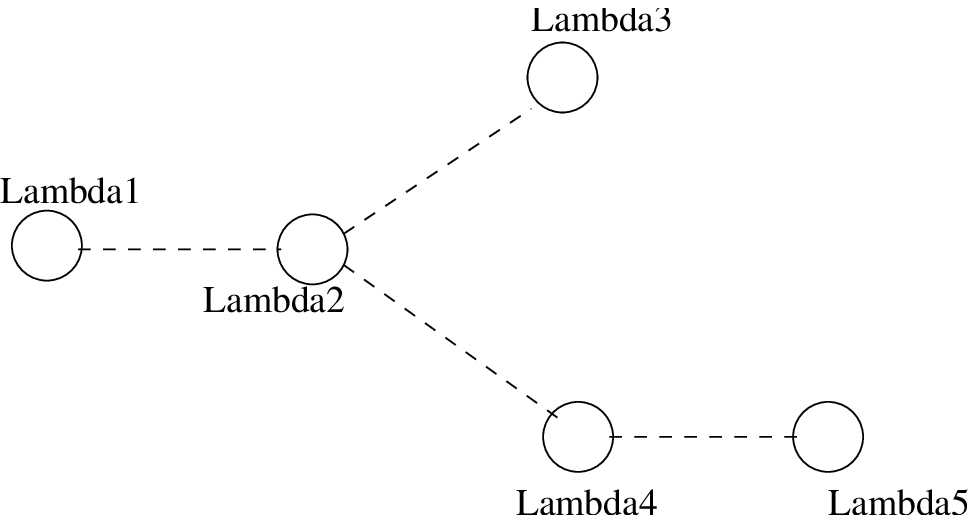} 
  \includegraphics[width=.32\textwidth]{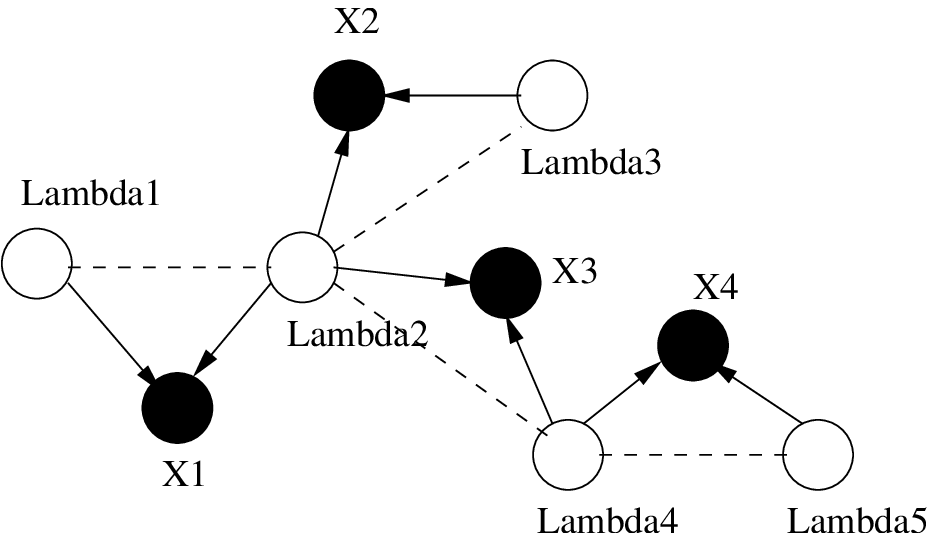} 
  \includegraphics[width=.32\textwidth]{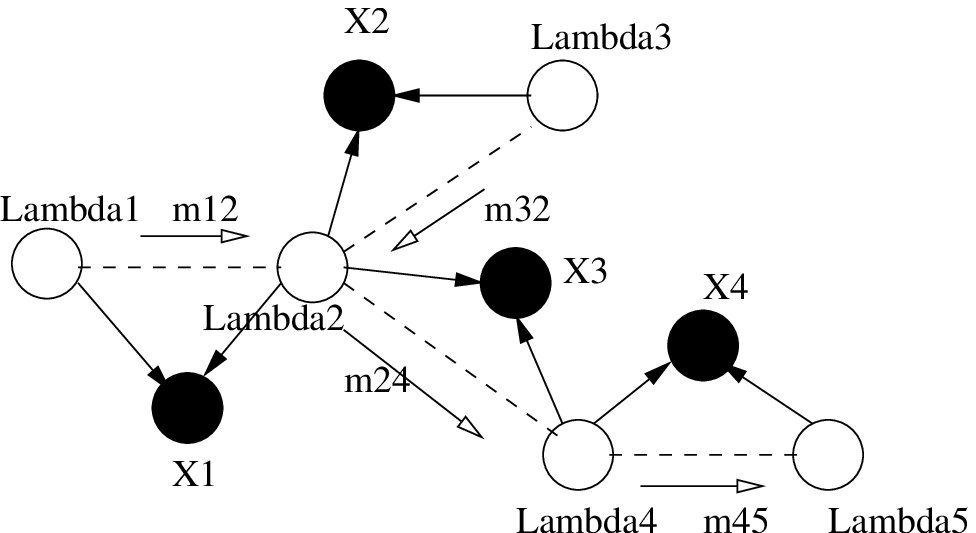}\\[2ex]
\includegraphics[width=1.5in]{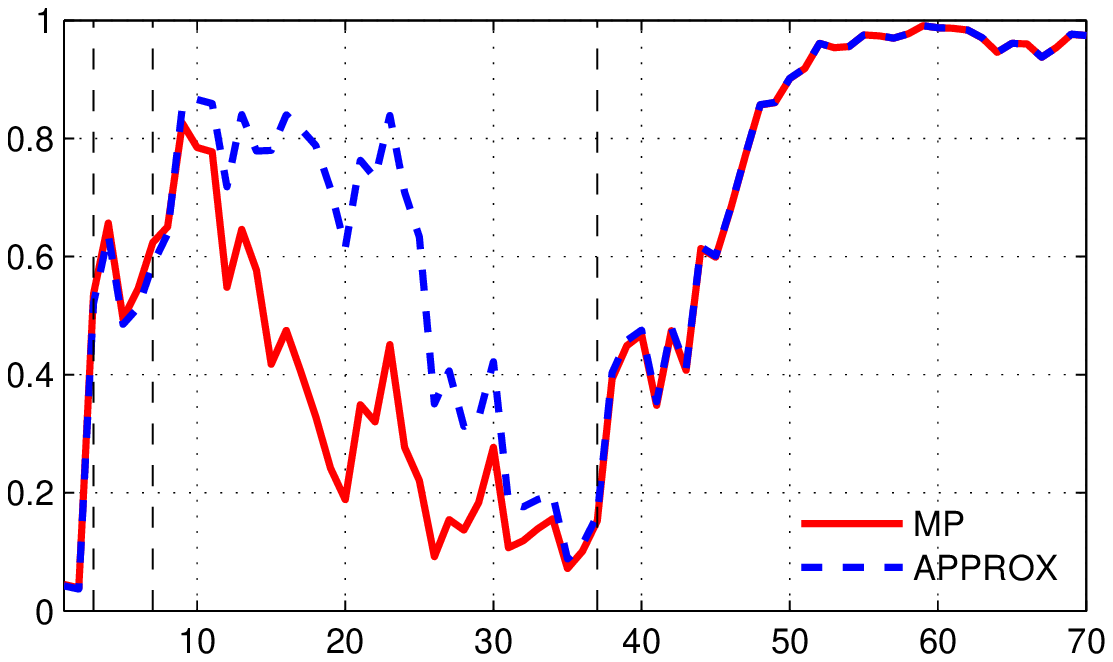}
\includegraphics[width=1.5in]{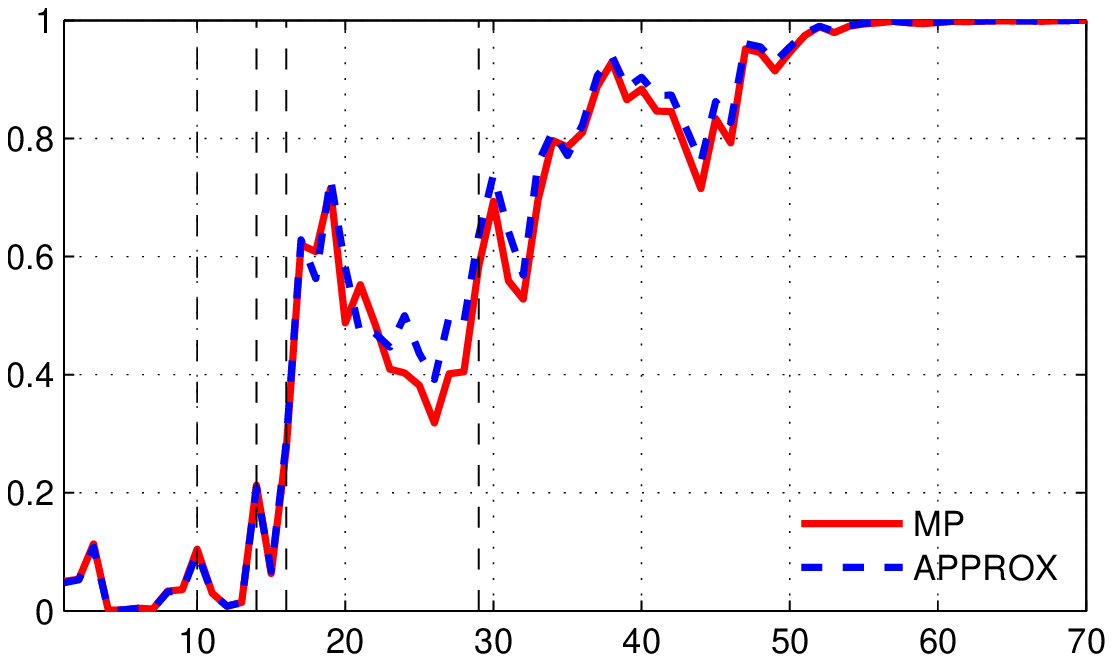}
\includegraphics[width=1.5in]{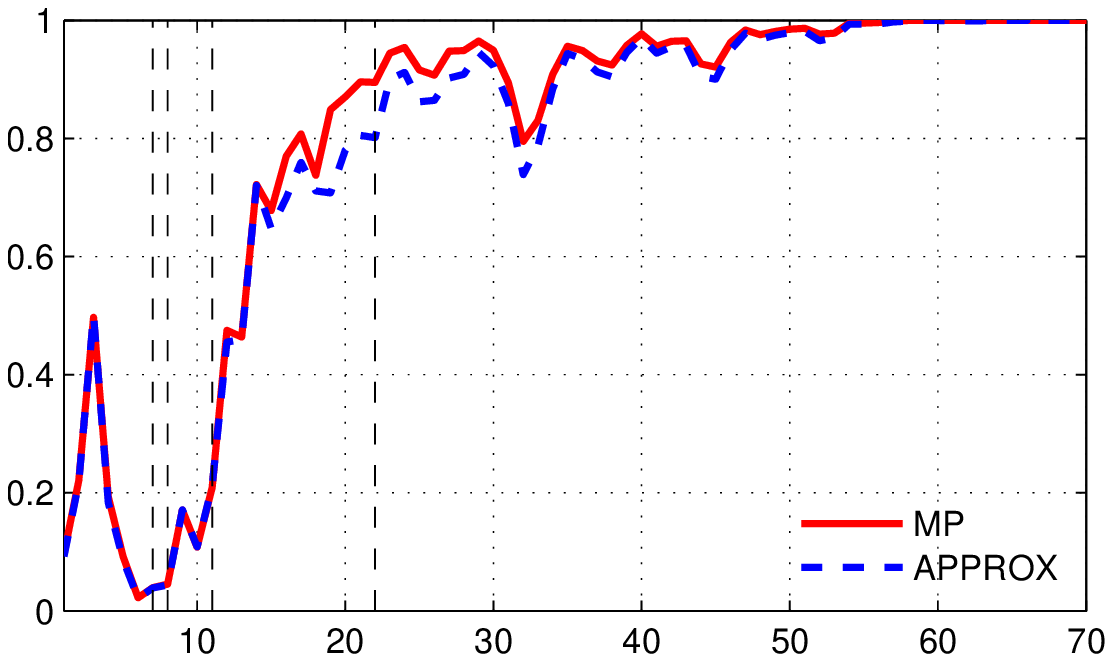}
\includegraphics[width=1.5in]{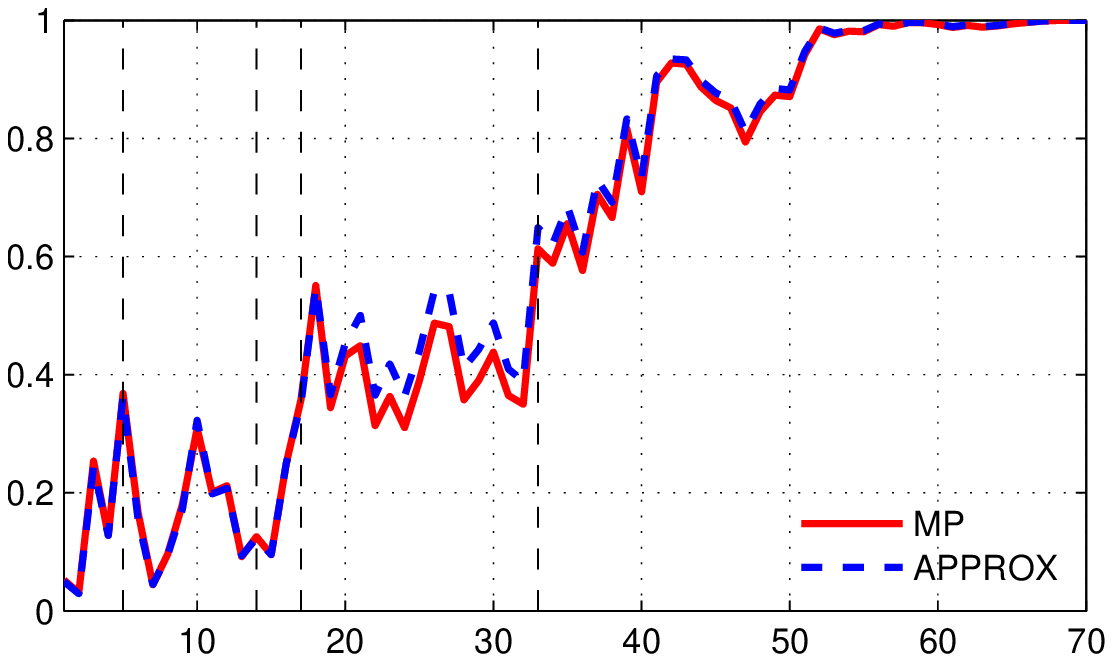}
\caption{ \small Top row illustrates a network (left), which induces a
  graphical model (middle). Right panel illustrates one stage of
  message-passing to compute posterior probabilities $\gam_j^n[n]$.
  Bottom row illustrates typical examples of posterior paths, $n
  \mapsto \gamma_j^n[n]$, obtained by EXACT and approximate (APPROX)
  message passing, for the subgraph on nodes $\{1,2,3,4\}$.  The
  change points are designated with vertical dashed lines.}

\label{fig:gm:paths}
\end{figure}


\section{Proof of Theorem~\ref{thm:itr:conv}}
\label{sec:proof:abs}




 For $x \in
\reals^\mdim$ (including $\pcal_\ddim$), we write
$x = (x\ui{0}, \xt)$ where $\xt = (x\ui{1},\dots,x\ui{\mdim-1})$.
Recall that $\eb\uii{0} = (1,0,\dots,0)$ and $\dnorm{x} =
\sum_{i=0}^{\mdim-1} |x_i|$. For $x \in \pcal_\ddim$, we have
$1 - x\ui{0} = \|\xt\|$, and 
\begin{align}\label{eq:dist:from:eb0}
  \dnorm{x - \eb\uii{0}} = \dnorm{(x\ui{0}-1,\xt)} = 1-x\ui{0} +
  \dnorm{\xt} = 2(1-x\ui{0}).
\end{align}
For $\thb =
(\thb\ui{0},\thbt) \in \reals_+^\mdim$, let 
\begin{align}\label{eq:thbs:thbb:defs}
  \thbs := \| \thbt\|_\infty = \max_{i=1,\dots,\mdim-1}
  \thb\ui{i}, \qquad 
  \thbb := \big(\thb\ui{0}, (\thbs \lipc) \onevec_{\mdim-1}\big) \in
  \reals_+^\mdim
\end{align}
where $\onevec_{\mdim-1}$ is a vector in $\reals^{\mdim - 1}$ whose
coordinates are all ones. We start by investigating how $\dnorm{\qf_{\thb}(x) - \eb\uii{0}}$
varies as a function of $\dnorm{x - \eb\uii{0}}$. 
\begin{lem}\label{lem:qf:dev}
  For $\lipc \le 1$, $\thbs > 0$, and $\thb\ui{0} = 1$,
  \begin{align}\label{eq:qf:dev}
    \Ncons := \sup_{\substack{\xv,\yv \,\in \,\pcal_\ddim, \\ 
        \dnorm{\xv - \eb\uii{0}} \,\le\, 
        \lipc \dnorm{\yv - \eb\uii{0}}}} \; 
    \frac{\| \qf_{\thb}(\xv) - \eb\uii{0}\|}{
      \dnorm{\qf_{\thbb}(\yv) - \eb\uii{0}}} = 1;
  \end{align}
\end{lem}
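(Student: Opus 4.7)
My plan is to reduce $\Ncons$ to a scalar inequality using the identity $\dnorm{z - \eb\uii{0}} = 2(1-z\ui{0})$ from~\eqref{eq:dist:from:eb0} and the monotonicity of a simple two-variable function. First I would write both norms in closed form. Since $\thb\ui{0} = 1$, we have $\qf_\thb(x)\ui{0} = x\ui{0}/(x\ui{0}+\xt^T\thbt)$, so $\dnorm{\qf_\thb(x) - \eb\uii{0}} = 2\xt^T\thbt/(x\ui{0}+\xt^T\thbt)$. Applying the same identity to $\qf_\thbb(y)$, and using that the last $\mdim-1$ coordinates of $\thbb$ all equal $\thbs\lipc$ (so $\sum_{i \ge 1} y\ui{i}\thbb\ui{i}$ collapses to $\thbs\lipc\onenorm{\yt}$), I get $\dnorm{\qf_\thbb(y) - \eb\uii{0}} = 2\thbs\lipc\onenorm{\yt}/(y\ui{0}+\thbs\lipc\onenorm{\yt})$.

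Next I would introduce the auxiliary map $f(u,w) := u/(u+w)$ on $[0,\infty)^2 \setminus \{(0,0)\}$, which a short computation shows is non-decreasing in $u$ and non-increasing in $w$. The desired bound $\Ncons \le 1$ then reads $f(\xt^T\thbt,\, x\ui{0}) \le f(\thbs\lipc\onenorm{\yt},\, y\ui{0})$, and I would prove it by a two-step chain. For the numerator: H\"older yields $\xt^T\thbt \le \thbs\onenorm{\xt}$, and the feasibility constraint, rewritten via~\eqref{eq:dist:from:eb0} as $\onenorm{\xt} \le \lipc\onenorm{\yt}$, upgrades this to $\xt^T\thbt \le \thbs\lipc\onenorm{\yt}$. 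For the denominator: $x\ui{0} = 1 - \onenorm{\xt} \ge 1 - \lipc\onenorm{\yt} \ge 1 - \onenorm{\yt} = y\ui{0}$, where the last inequality crucially uses $\lipc \le 1$. Monotonicity of $f$ in each variable, chained across the two bounds, delivers $\Ncons \le 1$.

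To match this with $\Ncons \ge 1$, I would exhibit an approaching sequence. Fixing $i^* \in \{1,\dots,\mdim-1\}$ with $\thb\ui{i^*} = \thbs$, and for $r \in (0,1]$ setting $y := (1-r)\eb\uii{0} + r\eb\uii{i^*}$ and $x := (1-\lipc r)\eb\uii{0} + \lipc r\eb\uii{i^*}$, the feasibility constraint holds with equality and the two closed-form expressions give the ratio $[1 + r(\lipc\thbs-1)]/[1 + \lipc r(\thbs-1)]$, which tends to $1$ as $r \downarrow 0$. The main conceptual obstacle will be the second step of the monotonicity chain: one must recognize that the hypothesis $\lipc \le 1$ is precisely what lets $x\ui{0}$ exceed $y\ui{0}$ (not merely $1-\lipc\onenorm{\yt}$), so that after replacing $\xt^T\thbt$ by its upper bound we effectively pass from the ``$\thb$-world'' to the ``$\thbb$-world'' in a way that composes correctly with the monotonicity of $f$.
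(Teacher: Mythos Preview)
Your proof is correct and takes a genuinely more elementary route than the paper's. The paper first reduces the general-$d$ problem to a scalar one by optimizing separately over $\xt$ and $\yt$ (via $\ell_1$--$\ell_\infty$ duality), arriving at the quantity
\[
M_L(\thbs,\gams) \;=\; \sup\Big\{\frac{\gfb_{\thbs}(r)}{\gfb_{\gams}(s)}:\; \bar r,\bar s\in(0,1],\;\bar r\le L\bar s\Big\},\qquad \gams=\thbs L,
\]
and then devotes a separate lemma (Lemma~\ref{lem:M:bound}) to computing $M_L(\theta,\gamma)$ in closed form for \emph{arbitrary} $\theta,\gamma$, using a change of variables and convexity/quasi-convexity arguments, finally specializing to $\gamma=\theta L$ to obtain the value $1$. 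Your argument bypasses this intermediate computation entirely: by writing both sides as values of the single bivariate map $f(u,w)=u/(u+w)$ and exploiting its coordinatewise monotonicity, you get $N\le 1$ in two strokes (H\"older plus the constraint for the first variable, $L\le 1$ for the second), and your explicit two-atom family $(x,y)$ supplies the matching lower bound. What the paper's approach buys is the explicit formula~\eqref{eq:M:bound} for general $\gamma$, but this generality is never used elsewhere; your approach is shorter and makes the role of the hypothesis $L\le 1$ more transparent.
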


We prove Lemma~\ref{lem:qf:dev} shortly in
Section~\ref{sec:proof:lem:qf:dev}. 
Given the lemma, let us proceed to the proof of the theorem.  
Recall that $\Tf : \pcal_\ddim \to
\pcal_\ddim$ is an $\lipc$-Lipschitz map, and that $\eb\uii{0}$ is a
fixed point of $\Tf$, that is, $\Tf(\eb\uii{0}) = \eb\uii{0}$. It
follows that for any $\xv \in \pcal_\ddim$, $\dnorm{\Tf(\xv)
  -\eb\uii{0}} \le \lipc \dnorm{\xv - \eb\uii{0}}$. Applying
Lemma~\ref{lem:qf:dev}, we get
\begin{align}\label{eq:peel:ident}
  \dnorm{\qf_{\thb}(\Tf(\xv)) - \eb\uii{0}} \le 
  \| \qf_{\thbb}(\xv) - \eb\uii{0}\|
\end{align}
for $\thb \in \reals_+^\mdim$ with $\thb\ui{0} = 1$, and $\xv \in
\pcal_\ddim$. (This holds even if $\thbs = 0$ where  both sides are zero.)


Recall the sequence $\{\thb_n\}_{n \ge 1}$ used in
defining  functions $\{\Qf_n\}$ accroding to~\eqref{eq:itr:def}, and
the assumption that $\thb_n\ui{0} = 1$, for all $n \ge 1$. 
Inequality~\eqref{eq:peel:ident} is key in allowing us to
peel operator $T$, and bring successive elements of $\{q_{\thb_n}\}$
together. Then, we can exploit the semi-group
property~\eqref{eq:semi:group:def} on adjacent elements of $\{q_{\thb_n}\}$.

To see this, for each $\thb_n$, let $\thbs_n$ and $\thbb_n$ be defined
as in~\eqref{eq:thbs:thbb:defs}. Applying~\eqref{eq:peel:ident} with $x$
replaced with $\Qf_{n-1}(\xv)$, and $\thb$ with $\thb_n$, we can write
\begin{align*}
  \dnorm{ \Qf_n(\xv) - \eb\uii{0}} &\le 
  \dnorm{ \qf_{\thbb_n} (\Qf_{n-1}(\xv)) - \eb\uii{0} } \quad 
  \text{(by~(\ref{eq:peel:ident}))} \\
  &= 
  \| \qf_{\thbb_n} ( \qf_{\thb_{n-1}}(T( Q_{n-2}(\xv)))) - \eb\uii{0} \| \\
  &= 
  \| \qf_{\thbb_n \pmult \thb_{n-1}}(T( Q_{n-2}(\xv)))) - \eb\uii{0} \|\quad 
  \text{(by semi-group property~\eqref{eq:semi:group:def})} 
\end{align*}
We note that $(\thbb_n \pmult \thb_{n-1})^*
= \lipc \thbs_n \thbs_{n-1} $ and 
\begin{align*}
\dagg{(\thbb_n \pmult \thb_{n-1})} =
\big(1, \lipc (\thbb_n \pmult \thb_{n-1})^* \onevec_{\mdim-1} \big)
=\big(1, \lipc^2 \thbs_n \thbs_{n-1} \onevec_{\mdim-1} \big).
\end{align*}
Here, $*$ and $\dagger$ act on a general vector in the sense
of~\eqref{eq:thbs:thbb:defs}.
Applying~\eqref{eq:peel:ident} once more, we get
\begin{align*}
  \dnorm{ \Qf_n(\xv) - \eb\uii{0}} &\le \| \qf_{(1, \lipc^2
    \thbs_n \thbs_{n-1} \onevec_{\mdim-1} )}( Q_{n-2}(\xv)) -
  \eb\uii{0} \|.
\end{align*}
The pattern is clear. Letting $\pthet_n := \lipc^n \prod_{k=1}^n
\thbs_k$, we obtain by induction
\begin{align}\label{eq:mid:1}
   \dnorm{ \Qf_n(\xv) - \eb\uii{0}} \le 
   \| \qf_{(1, \pthet_n \onevec_{\mdim-1})}( Q_{0}(\xv)) -
  \eb\uii{0} \|.
\end{align}
Recall that $\Qf_0(x) := x$. Moreover,
\begin{align}\label{eq:mid:2}
  \| \qf_{(1, \pthet_n \onevec_{\mdim-1})}(x) - \eb\uii{0}\| = 2 \big(1-
  [\qf_{(1, \pthet_n \onevec_{\mdim-1})}(x)]\ui{0} \big) =
  2\big(1-\gf_{\pthet_n}(x\ui{0}) \big)
\end{align}
where the first inequality is by~\eqref{eq:dist:from:eb0}, and the
second is easily verified by noting that all the elements of
$(1,\pthet_{n}\onevec_{\mdim-1})$, except the first, are
equal. Putting~\eqref{eq:mid:1} and~\eqref{eq:mid:2} together with the
bound $1-\gf_{\theta}(r) = \frac{\theta(1-r)}{r+\theta(1-r)} \le
\theta \frac{1-r}{r}$, which holds  for
$\theta > 0$ and $r \in (0,1]$, we obtain
$ \dnorm{ \Qf_n(\xv) - \eb\uii{0}} \le 2 \pthet_n  \frac{1-x\ui{0}}{x\ui{0}}$.
By sub-Gaussianity assumption on $\{\log \thbs_k\}$, we have
\begin{align}\label{eq:thbs:subg:prob:bound}
  \pr \Big( \frac1n \sum_{k=1}^n \log \thbs_k - \ex \log \thbs_1 >
  \eps \Big) \le \exp(-c\, n\eps^2/\sigs^2),
\end{align}
for some absolute constant $c > 0$. (Recall that $\sigs$ is an upper
bound on the sub-Gaussian  norm $\sgnorm{\log \thbs_1}$.) On the
complement of the event in~\ref{eq:thbs:subg:prob:bound}, we have
$\prod_{k=1}^n \thbs_k \le e^{n(-\Is+\eps)}$, which completes the proof.

\subsection{Proof of Lemma~\ref{lem:qf:dev}}\label{sec:proof:lem:qf:dev}
We consider
the simplest case first, namely $\ddim = 2$. For $\theta \in
\reals_+$, let $\gf_\theta : [0,1] \to [0,1]$ be defined by
\begin{align}\label{eq:gf:def}
    \gf_\theta(\rv) := \frac{\rv}{\rv+\theta(1-\rv)}.
\end{align}
This function completely describes $\qf_{\thb}$ when $d = 2$. More
precisely, with $\thb = (1,\theta)$, one has $\qf_{\thb}(x) =
\big(\gf_\theta(x^0), 1-\gf_\theta(x^0)\big)$. Note that $\qf_{\thb}(x)$
is close to $\eb^{0}$ iff $\gf_\theta(x^0)$ is close to $1$. To simplify notation, let $\rbar := 1-r$  for $r \in
[0,1]$. Similarly, let
\begin{align}\label{eq:gfb:def}
  \gfb_\theta(r) := 1-\gf_\theta(r) = \frac{\theta \rbar}{1-\rbar + \theta
  \rbar}.
\end{align}
The next lemma allows us to quantify how $|\gfb_\theta(r)|$ varies in terms of
$|\rbar|$. Consider the following quantity
\begin{align}\label{eq:M:def}
    \Mcons_{\lipc}(\theta,\gam) := \sup \Big\{ 
     \frac{|\gfb_\theta(r)|}{|\gfb_\gamma(s)|}:\; \rbar, \sbar \in
     (0,1],\; \rbar \le \lipc \sbar \Big\}.
\end{align}

\begin{lem}\label{lem:M:bound}
  Assume that $\lipc \le 1$ and $\theta > 0$. Let $\eps:=
  1-\theta$ and $\gamma := 1 - \delta$. Then,
  \begin{align}\label{eq:M:bound}
    \Mcons_\lipc(\theta,\gamma) = \frac{\theta \lipc}{|\gamma|}
    \max\Big\{ 1, \Big| \frac{1-\delta}{1 - \lipc \eps}\Big| \Big\}.
  \end{align}
  In particular, for $\gamma = \theta \lipc$, we have
  $\Mcons_\lipc(\theta,\gamma) = 1$.
\end{lem}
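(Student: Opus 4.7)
The plan is to reduce the two-variable sup defining $\Mcons_\lipc(\theta,\gam)$ to a univariate optimization by exploiting the fact that $\gfb_\theta(r)$ is monotone in $\rbar$, and then to analyze the resulting one-dimensional function by elementary calculus, separating cases according to the sign of $1-\delta\sbar$.

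First I would verify that $\gfb_\theta(r) = \theta\rbar/(1-\eps\rbar)$ is well-defined and strictly increasing in $\rbar \in (0, 1]$ whenever $\theta > 0$ and $\lipc \le 1$: since $\eps = 1-\theta < 1$, one has $1-\eps\rbar > 0$, and a direct differentiation gives $(\rbar/(1-\eps\rbar))' = 1/(1-\eps\rbar)^2 > 0$. Consequently, for any fixed $\sbar \in (0,1]$ the ratio in~\eqref{eq:M:def} is increasing in $\rbar$, so the supremum is attained on the boundary $\rbar = \lipc\sbar$, reducing the problem to computing $\sup_{\sbar \in (0,1]} F(\sbar)$, where
\[
  F(\sbar) := \frac{\theta\lipc}{|\gam|} \cdot \frac{|1-\delta\sbar|}{1-\eps\lipc\sbar}.
\]
(The denominator stays positive because $\eps\lipc < 1$ regardless of the sign of $\eps$, by $\theta > 0$ and $\lipc \le 1$.)

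Next I split into cases based on whether $1-\delta\sbar$ keeps constant sign on $(0,1]$. If $\delta \le 1$ (equivalently $\gam \ge 0$), then $1-\delta\sbar > 0$ throughout, so $F$ is a ratio of two positive affine functions whose derivative has constant sign $\operatorname{sign}(\eps\lipc - \delta)$; hence $F$ is monotone, and its supremum is attained at one of the endpoints $\sbar \to 0^+$ or $\sbar = 1$. If $\delta > 1$ (i.e., $\gam < 0$), then $1-\delta\sbar$ vanishes at $\sbar_0 = 1/\delta \in (0,1)$; $F$ is then piecewise rational, decreasing from $\theta\lipc/|\gam|$ to $0$ on $(0,\sbar_0)$, and increasing from $0$ back up to $F(1)$ on $(\sbar_0,1]$ (its derivative there has sign $\operatorname{sign}(\delta - \eps\lipc) > 0$), so once again the supremum is attained at an endpoint.

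Evaluating the endpoints, $F(0^+) = \theta\lipc/|\gam|$ and $F(1) = (\theta\lipc/|\gam|)\cdot |1-\delta|/(1-\lipc\eps)$, yields~\eqref{eq:M:bound}. For the ``in particular'' assertion with $\gam = \theta\lipc$ (so $\delta = 1-\theta\lipc$, $1-\delta = \theta\lipc$, $1-\lipc\eps = 1 - \lipc + \lipc\theta$), the prefactor $\theta\lipc/|\gam|$ equals $1$, and the inequality $\theta\lipc \le 1 - \lipc + \lipc\theta$ needed for the max to equal $1$ is equivalent to $\lipc \le 1$, which holds by hypothesis; hence $\Mcons_\lipc(\theta,\theta\lipc) = 1$. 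The only delicate point is the bookkeeping when $\gam < 0$ forces $1-\delta\sbar$ to change sign, but the piecewise monotonicity argument dispatches this cleanly.
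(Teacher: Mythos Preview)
Your argument is correct and genuinely different from the paper's. The paper performs the change of variables $x=1/\rbar$, $z=\rbar/\sbar$, rewrites the ratio as $\tfrac{\theta}{|\gamma|}\,|m(x,z)|$ with $m(x,z)=(xz-\delta)/(x-\eps)$, and then reduces to endpoints by noting that $z\mapsto |m(x,z)|$ is convex on $[1/x,\lipc]$ and that $x\mapsto |m(x,1/x)|$, $x\mapsto |m(x,\lipc)|$ are quasi-convex on $[1/\lipc,\infty)$. You instead exploit directly that $\rbar\mapsto \rbar/(1-\eps\rbar)$ is increasing to collapse to the boundary $\rbar=\lipc\sbar$, and then analyze the single-variable function $F(\sbar)$ by computing the sign of its derivative and handling the sign change of $1-\delta\sbar$ when $\gamma<0$. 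Your route is shorter and avoids the substitution and the quasi-convexity step; the paper's version treats the two variables more symmetrically but at the cost of extra machinery. Both land on the same endpoint evaluation $\max\{F(0^+),F(1)\}$, and your verification of the special case $\gamma=\theta\lipc$ via $\theta\lipc\le 1-\lipc+\lipc\theta\Leftrightarrow \lipc\le 1$ is equivalent to the paper's observation that $\lipc\theta/\gamma=\gf_{1/\theta}(\lipc)\in[0,1]$.
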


\begin{proof}
  We can write
    \begin{align*}
        \Mcons_\lipc(\theta,\gam) &= \sup_{\rb,\sbar} \Big|
        \frac{\theta \rb}{1-\rb + \theta \rb}
        \frac{1-\sbar + \gam \sbar}{\gam \sbar} \Big| \\
        &= \frac{\theta}{|\gam|}\sup_{\rb,\sbar} \Big| \frac{\rb}{\sbar}  \cdot
        \frac{(\gam -1)\sbar+1}{(\theta -1)\rb + 1}\Big| \\
        &= \frac{\theta}{|\gam|}\sup_{\rb,\sbar} \Big|
        \frac{(\gam -1)+1/\sbar}{(\theta -1) + 1/\rbar}\Big| 
    \end{align*}
    Let $x = 1/\rb$ and $z = \rb/\sbar$. Then, the set $\{
    (\rbar,\sbar) :\; \rbar,\sbar \in (0,1],\; \rbar \le \lipc
    \sbar\}$ corresponds to
    \begin{align*}
      \{(x,z):\; x \ge 1, \; xz \ge 1,\; z \le \lipc \} 
      = \{(x,z):\; x \ge \frac1\lipc, \; \frac1{x} \le z \le \lipc \} 
    \end{align*}
    where in the second inequality, we used $\lipc \le 1$ and that
    $[\frac1{x},\lipc]$ is empty unless $x \ge
    \frac1{\lipc}$. Letting $m(x,z): = (xz - \delta) / (x-\eps)$, we obtain

    \begin{align*}
        \Mcons_\lipc(\theta,\gam) = \frac{\theta}{|\gam|} 
        \sup_{x \ge \frac1\lipc, \; z \in [\frac1{x},\lipc]}
        | m(x,z)|
    \end{align*}
    The function $m(x,z)$ is well-defined over the specified region
    (that is, finite-valued) since $\theta > 0$ implies $\eps < 1$,
    hence $x - \eps > 0$.
    For fixed $x \ge \frac1\lipc$, the function $z \mapsto |m(x,z)|$ is
    convex, hence achieving its maximum over the convex set
    $[\frac1x,\lipc]$, at one of the extreme points,
    \begin{align*}
       \Mcons_\lipc(\theta,\gam) = \frac{\theta}{|\gam|} 
        \sup_{x \ge \frac1\lipc} \Big[ \max\big\{ | m(x,\tfrac1x)|,
        |m(x,\lipc)| \big\} \Big]
    \end{align*}
    Both $x \mapsto |m(x,\frac1x)|$ and $x \mapsto |m(x,\lipc)|$ are
    quasi-convex, hence their suprema over $[\frac1\lipc,\infty)$ are
    obtained at one of the endpoints. Thus,
    \begin{align*}
       \Mcons_\lipc(\theta,\gam) &= \frac{\theta}{|\gam|} 
          \max\Big\{ \sup_{x \ge \frac1\lipc} 
          \Big| \frac{1-\delta}{x-\eps}\Big|,\;
        \sup_{x \ge \frac1\lipc}  \Big| \frac{x \lipc
          -\delta}{x-\eps}\Big|\Big\} \\
        &=\frac{\theta}{|\gam|} 
        \max \Big\{ \Big| \frac{1-\delta}{\frac1\lipc-\eps}\Big|, \, 0,\,
        \Big|  \frac{\lipc \frac1\lipc -\delta}{\frac1\lipc-\eps} \Big|, \,  \lipc\Big\}
    \end{align*}
    which simplifies to~\eqref{eq:M:bound}.


For the special case, $\gamma = \theta
\lipc$, we first note that $\lipc \theta / \gamma =
\gf_{1/\theta}(\lipc)$. Then, we have $M_{\lipc}(\theta,\gamma) =
\max\{1,\gf_{1/\theta}(\lipc)\}$. Since $\gf_{1/\theta}(\lipc) \in
[0,1]$, we get the desired result.
\end{proof}

Let us now move to the case of general $\ddim$. By~(\ref{eq:dist:from:eb0}), we have
  \begin{align}\label{eq:N:rewrite}
    \Ncons = \sup \Big\{
    \frac{1-[\qf_{\thb}(x)]\ui{0}}{1-[\qf_{\thbb}(y)]\ui{0} }
    :\;\; \mybar{x\ui{0}} \le \lipc  \mybar{y\ui{0}},\; \|\xt\|=
    \mybar{x\ui{0}},\; \| \yt\| = \mybar{y\ui{0}} \Big\}.
  \end{align}
  We are effectively optimizing over four variables $x\ui{0}, y\ui{0},
  \xt$ and $\yt$. Let us first optimize over $\xt$, fixing the other three.
  By definition~\eqref{eq:qf:def}, we have
  \begin{align*}
    \sup_{\xt: \;\|\xt\|=
    \mybar{x\ui{0}} } \big\{ 1-[\qf_{\thb}(x)]\ui{0} \}
   &= \sup_{\xt: \;\|\xt\|=
    \mybar{x\ui{0}} } \Big\{ 1- 
  \frac{\thb\ui{0} x\ui{0}}
       {\thb\ui{0} x\ui{0} + \thbt^T  \xt} \Big\} \\
   &= 1- 
  \frac{\thb\ui{0} x\ui{0}}
       {\thb\ui{0} x\ui{0} + \sup \big\{  \thbt^T  \xt :\; 
       \|\xt\|= \mybar{x\ui{0}} \}} 
    = 1- 
  \frac{\thb\ui{0} x\ui{0}}
       {\thb\ui{0} x\ui{0} +  \| \thbt\|_\infty\mybar{x\ui{0}} },
  \end{align*}
  by the duality of $\ell_1$ and $\ell_\infty$ norms. Recalling the
  definition~\eqref{eq:gf:def}, and using $\thb\ui{0} = 1$ and
  $\|\thbt\|_\infty = \thbs$, we have
  \begin{align}\label{eq:optim:xt}
    \sup_{\xt: \;\|\xt\|=
    \mybar{x\ui{0}} } \big\{ 1-[\qf_{\thb}(x)]\ui{0} \} = 1- \gf_{\thbs}(x\ui{0}).
  \end{align}
  Next, we optimize over $\yt$. Let $\gams := \thbs \lipc$. We note for $\|\yt\| =
  \mybar{y\ui{0}}$,
  \begin{align*}
    [\qf_{\thbb}(y)]\ui{0} = 
      \frac{\thb\ui{0} y\ui{0}}
          {\thb\ui{0} y\ui{0} + \gams \onevec_{m-1}^T  \yt} 
       = \frac{\thb\ui{0} y\ui{0}}
          {\thb\ui{0} y\ui{0} + \gams \mybar{y\ui{0}}} = \gf_{\gams}( y\ui{0})
  \end{align*}
  where we have used $\onevec_{m-1}^T  \yt =
  \|\yt\|$ and $\thb\ui{0} = 1$. In other words, we have shown
  \begin{align}\label{eq:optim:yt}
    \sup_{\yt :\; \|\yt\| = \mybar{y\ui{0}}} \big\{
    1-[\qf_{\thbb}(y)]\ui{0}\big\}
    = 1- \gf_{\gams}( y\ui{0}).
  \end{align}
  Substituting~\eqref{eq:optim:xt} and~\eqref{eq:optim:yt}
  in~\eqref{eq:N:rewrite}, and recalling the
  notation~\eqref{eq:gfb:def} and definition~(\ref{eq:M:def}), we get
  \begin{align*}
    N = \sup \Big\{
    \frac{\gfb_{\thbs}(x\ui{0})}{\gfb_{\gams}(y\ui{0})} :\;
    \mybar{x\ui{0}} \le \lipc  \mybar{y\ui{0}} \Big\} = M_{\lipc}(\thbs,\gams).
  \end{align*}
  Applying Lemma~\ref{lem:M:bound} in the special case $\gams = \thbs
  \lipc$, we get $M_{\lipc}(\thbs,\gams) = 1$ which gives the
  desired result.

\section{Proof of Proposition~\ref{prop:alg:rep}}
\label{sec:proof:prop:alg:rep}
We divide the proof into pieces with some of the more technical details
deferred to the Appendix. We will need some extra notations for
the indexing of coordinates of probability vectors in
$\pcal_\ddim = \pcal(\{0,1\}^\ddim)$. So far we have used superscripts to index the
coordinates from left to right. It is sometimes convenient to use a
complementary subscript indexing, by going from right to left. More
specifically, for $x \in \pcal_\ddim$, we write
\begin{align}\label{eq:sub:sup:nota}
  x = (x\ui{0},x\ui{1},\dots,x\ui{\mdim-1}) = 
  (x\li{\mdim-1}, \dots, x\li{1},x\li{0})
\end{align}
so that $x\li{i} = x\ui{\mdim-1-i}$. We also interpret $x\li{i}$ as
the value that $x$ assigns to the binary representation\footnote{For
  example, for $d = 2$, $x = (x\li3,x\li2,x\li1,x\li0) = \big(
  x(\{(1,1)\}), x(\{(1,0)\}), x(\{(0,1)\}), x(\{(0,0)\}\big)$, where
  the multitude of parentheses is because in the RHS, we are treating
  $x$ as a measure (i.e., a set-valued function) on all subsets of
  $\{0,1\}^d$.} of $i$. Furthermore, for any $i = 0,\dots,m-1$, let
\begin{align}
\label{eq:bin:rep:def}
	\bid_j(i) := 
        \text{$j$th bit from the left in binary expansion of $i$}, \quad j  \in [d],
\end{align}
so that the binary expansion of $i$ is the string $\bid_1(i)
\bid_2(i) \dots \bid_\ddim(i)$.

Before starting the proof, let us give an explicit expression for the
common sequence $\{\thb_n\}$ used in the iterations of both the exact
and approximate algorithms. Recall the notation $\yv_n :\equiv P(\Zs^n|\Xbs^n)$ introduced in~\eqref{eq:equiv:notation}, in which $\yv_n \in \pcal_\ddim$ is defined by looking at $P(\Zs^n|\Xbs^n)$ as a random probability vector indexed by $\Zs^n \in \{0,1\}^d$. Similarly, in view of ~\eqref{eq:graph:mod:Z}, let 
\begin{align}\label{eq:h:def}
\hv_n :\equiv \prod_{j \in V} P(X_j^n | Z_j^n) \prod_{\{i,j\} \in E} P(X_{ij}^n | Z_i^n , Z_j^n)  
\end{align}
where the ingredients are given by~\eqref{eq:u:potentials}. As before,
in this expression, we are treating $\Zs^n$ as indexing a random
vector in $\reals_+^\mdim$. For $n \in \nats$, let 
\begin{align}
	\thb_n := \frac{\hv_n}{(\hv_n)_{m-1}},
\end{align}
where $(\hv_n)_i$ denotes the $i$th entry of $\hv_n$, using subscript
indexing according to~\eqref{eq:sub:sup:nota}. 
In other words, to obtain $\thb_n$, we normalize $\hv_n = (
(\hv_n)_{m-1},\dots, (\hv_n)_0 )$ by dividing it by its first
entry. Using~\eqref{eq:u:potentials} and~\eqref{eq:bin:rep:def}, we
can write
\begin{align}\label{eq:def:actual:thbn}
	(\thb_n)_\ell = 
		\prod_{j \in V} \Big[ 
		\frac{g_j(X_j^n)}{f_j(X_j^n)}\Big]^{1-\bid_j(\ell)} 
		\prod_{\{i,j\} \in E} \Big[
		\frac{g_{ij}(X_{ij}^n)}{f_{ij}(X_{ij}^n)}
		\Big]^{1-b_j(\ell) \vee b_j(\ell)},
\end{align}
where $\vee$ denotes the maximum.

Recall that for $\rho \in [0,1]$, we use the
notation $\rhob := 1-\rho$.
\subsection{The approximate algorithm follows general
  iteration~\eqref{eq:itr:def}}
\label{sec:proof:approx:alg:rep}
In order to avoid confusion with exact quantities, we will use a tilde to denote the posterior quantities produced by the approximate iteration. For example,~\eqref{eq:mod:prior:Z} can be rewritten as an exact equality in terms of approximate quantities,
\begin{align}\label{eq:mod:prior:Z:2}
\Pt(\Zs^n |\Xbs^{n-1}) = \prod_{j \in V} 
	\nu(Z_j^n; \gamt_j^{n-1}[n])
\end{align}
We first note that  recursion~\eqref{eq:approx:gam:recur} is simplified for a geometric prior. We have $\pi_j(n) = \rhob_j^{n-1} \rho_j$ and $\pi_j [n]^c = \rhob_j^n$. Then,~\eqref{eq:approx:gam:recur} for the approximate algorithm is
\begin{align}\label{eq:approx:gam:recur:2}
	\gamt_j^{n-1}[n] = \rho_j + \rhob_j 
	\gamt_j^{n-1}[n-1].
\end{align}
Consider an operator $\Rf_{\rho}$ on $\pcal_1 := \pcal(\{0,1\})$ defined by 
\begin{align}
\begin{split}\label{eq:Rf:def}
	\Rf_\rho\Big( \Big( {x_1 \atop x_0} \Big) \Big)
	&:= \rho\Big( {1 \atop 0} \Big) + 
	(1-\rho)\Big({x_1 \atop x_0} \Big)
	= \Big( {1 \atop 0} \Big) + 
	(1-\rho)\Big( {-x_0 \atop x_0} \Big)
\end{split}
\end{align}
for any vector $x = (x\li1,x\li0) = (1-x\li0,x\li0) \in
\pcal_1$. (We are using the subscript indexing introduced
in~\eqref{eq:sub:sup:nota}.)


Recall that  $\pcal_d := \pcal(\{0,1\}^d)$. Let  $\Mf_j : \pcal_d \to \pcal_1$ be the $j$th marginalization operator, that is, an operator which produces the $j$-th marginal when applied to probability vector $y \in \pcal_d$. More explicitly, 
\begin{align}\label{eq:marg:expr}
	[\Mf_j(y)]_1 := \sum_{i \,: \;b_j(i) = 1} y_i.
\end{align}
(On the LHS, we are again using the subscript indexing.)
For $z \in \pcal_r$ and $y \in \pcal_d$, let $z \otimes y \in \pcal_{r+d}$ be the probability vector corresponding to the product of $z$ and $y$ as measures. It is the usual tensor product if we think of $z$ and $y$ as vectors.

Now, let
\begin{align*}
\yvt_n :\equiv \Pt(\Zs^n | \Xbs^n), \quad 
\text{and} \quad 
\wvt_n :\equiv \Pt(\Zs^{n} | \Xbs^{n-1})
\end{align*}
in the sense discussed in Section~\ref{sec:mcp:results} leading to~\eqref{eq:equiv:notation}. In words, $\yvt_n$ is a vector in $\pcal_d$ representing the estimate of the joint posterior of $\Zs^n$ given $\Xbs^n$, produced at the $n$-th step of the approximate algorithm. Similar interpretation holds for $\wvt_n$.

Recall that 
$\gamt_j^n[n] = \Pt(Z_j^n = 1| \Xbs^n)$ and $\gamt_j^{n-1}[n] = \Pt(Z_j^n = 1 | \Xbs^{n-1})$. In other words, $(\gamt_j^{n}[n], 1-\gamt_j^{n}[n])$ is the $j$-th marginal of $\yvt_n$, and $(\gamt_j^{n-1}[n], 1-\gamt_j^{n-1}[n])$ is the $j$-th marginal of $\wvt_n$. It follows from~\eqref{eq:approx:gam:recur:2} and the definitions of $\Rf_\rho$ and $\Mf_j$ that
\begin{align*}
	\Mf_j(\wvt_n) = \Rf_{\rho_j}(\Mf_j( \yvt_{n-1})).
\end{align*}
On the other hand,~\eqref{eq:mod:prior:Z:2} states  that $\wvt_n$ is a product measure,
\begin{align*}
	\wvt_n = \otimes_{j=1}^d \Mf_j(\wvt_n).
\end{align*}
Combining the two, we get
\begin{align}\label{eq:Tap:def}
	\wvt_n = \otimes_{j=1}^d \big[ \Rf_{\rho_j}(\Mf_j(
        \yvt_{n-1})) \big] =: \Tap (\yvt_{n-1}).
\end{align}
It is easy to verify that each element of $\Tap(\yvt_{n-1})$ as defined above is a polynomial of degree (at most) $d$ in elements of $\yvt_{n-1}$, with coefficients that depend only on $\{\rho_j\}$.

\medskip
It remains to investigate how $\wvt_n$ produces
$\yvt_n$. Using~\eqref{eq:graph:mod:Z}, we observe that $\wvt_n \equiv
\Pt(\Zs^n | \Xbs^{n-1})$ is mapped to $\Pt(\Zs^n,\Xs^n| \Xbs^{n-1})$
by a pointwise multiplication with $\hv_n$ as defined in~\eqref{eq:h:def}.
Since, $\yvt_n \equiv \Pt(\Zs^n | \Xbs^n)$ is obtained from $\Pt(\Zs^n,\Xs^n| \Xbs^{n-1})$ by a normalization over $\Zs^n$, we obtain
\begin{align}\label{eq:wvt:yvt:update}
	\yvt_n = \frac{\wvt_n \circ \hv_n}{\wvt_n^T \hv_n} 
	= \frac{\wvt_n \circ \thb_n}{\wvt_n^T \thb_n} = \qf_{\thb_n}(\wvt_n). 
\end{align}
This completes the proof.

\subsection{The exact  algorithm follows general
  iteration~\eqref{eq:itr:def}}
\label{sec:proof:exact:alg:rep}
Let 
\begin{align*}
\yv_n :\equiv P(\Zs^n | \Xbs^n), \quad 
\text{and} \quad 
\wv_n :\equiv P(\Zs^{n} | \Xbs^{n-1})
\end{align*}
be the posteriors produced by the exact algorithm. One observes that~\eqref{eq:wvt:yvt:update} holds with $\wvt_n$ replaced with $\wv_n$ and $\yvt_n$ replaced with $\yv_n$. That is, $y_n = q_{\thb_n}(w_n)$. The difference with the approximate algorithm is in updating $\wv_n$ based on $\yv_{n-1}$. To derive this map, we need the following lemma. Recall that $\pi_j$ is the prior on the $j$-th change point $\lam_j$.

\begin{lem}\label{lem:ratio}
	Let $\ical \subset [d]$ and consider collections of integers $\{k_j\}_{j \in \ical}$ and $\{m_j\}_{j \in \ical}$  in $\{n+1,n+2,\dots\}$. Then, we have 
	\begin{align*}
		\frac{P(\lam_j = k_j, j \in \ical|\Xbs^n)}{P(\lam_j = m_j, j \in \ical| \Xbs^n)} = 
		\prod_{j \in \ical} \frac{\pi_j(k_j)}{\pi_j(m_j)}
	\end{align*}
\end{lem}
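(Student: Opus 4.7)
My plan is a direct Bayes-rule computation, with the key observation being that once all $\{k_j\}_{j\in\ical}$ exceed $n$, the likelihood of $\Xbs^n$ ceases to depend on the particular values $k_j$ (only on the fact that they exceed $n$). The ratio therefore collapses to a ratio of prior factors.

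Concretely, let $\jcal := [d] \setminus \ical$. I would start by marginalizing the other change points and applying Bayes rule:
\begin{align*}
P(\lam_j = k_j, j \in \ical \mid \Xbs^n)
&= \frac{1}{P(\Xbs^n)}\sum_{\{l_j\}_{j\in\jcal}} P(\Xbs^n \mid \lam_i = k_i,\, i\in\ical;\; \lam_j = l_j,\, j\in\jcal)\; \prod_{j\in\ical}\pi_j(k_j)\prod_{j\in\jcal}\pi_j(l_j),
\end{align*}
using the factorization of the prior from~\eqref{eq:joint:def}. I would then argue that the conditional likelihood $P(\Xbs^n\mid \lams)$ depends on $\{k_j\}_{j\in\ical}$ only through the indicators $\{1\{k_j\le n\}\}$. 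For the private data this is clear: since $k_j > n$, $\Xb_j^n$ is entirely pre-change, distributed as $g_j^{\otimes n}$ regardless of the exact value of $k_j$. For each shared sequence $\Xb_e^n$ with $e=\{s_1,s_2\}$, the relevant parameter is $\lam_e \wedge n = (\lam_{s_1}\wedge\lam_{s_2})\wedge n$; if $s_1\in\ical$ then $k_{s_1}>n$, so replacing $k_{s_1}$ by any other value exceeding $n$ leaves $\lam_e\wedge n = l_{s_2}\wedge n$ (or $= n$ if also $s_2\in\ical$) unchanged.

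Therefore the inner sum factors as $\prod_{j\in\ical}\pi_j(k_j)$ times a quantity $A(\Xbs^n)$ that is independent of the tuple $(k_j)_{j\in\ical}$:
\begin{align*}
P(\lam_j = k_j, j\in\ical \mid \Xbs^n) = A(\Xbs^n)\,\prod_{j\in\ical}\pi_j(k_j).
\end{align*}
Forming the ratio with the analogous expression for $\{m_j\}$ (which uses the \emph{same} $A(\Xbs^n)$, since each $m_j>n$) cancels $A(\Xbs^n)/P(\Xbs^n)$ and yields the claimed identity.

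The only delicate point is the likelihood-invariance argument for shared edges, so I would state it as a short separate claim: \emph{for any $\lams,\lams'$ agreeing on $\jcal$ and satisfying $\lam_i,\lam_i'>n$ for all $i\in\ical$, one has $P(\Xbs^n\mid\lams)=P(\Xbs^n\mid\lams')$.} This follows edge by edge from the observation above together with~\eqref{eq:joint:def}. Everything else is bookkeeping.
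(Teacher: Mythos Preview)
Your argument is correct and follows essentially the same route as the paper: apply Bayes' rule and observe that the likelihood of $\Xbs^n$ is constant in $(k_j)_{j\in\ical}$ once each $k_j>n$, so that the posterior ratio reduces to the prior ratio. The only difference is packaging: the paper invokes Lemma~\ref{lem:const} (the constancy of the marginalized likelihood $P(\Xbs^n\mid \lam_j=k_j,\,j\in\ical)$ over $\{n+1,n+2,\dots\}^{|\ical|}$) as a black box, whereas you establish the slightly stronger statement that the \emph{full} likelihood $P(\Xbs^n\mid\lams)$ is unchanged under varying $(k_j)_{j\in\ical}$ above $n$, arguing edge by edge, and then marginalize. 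One minor cleanup: in your displayed identity $P(\lam_j=k_j,\,j\in\ical\mid\Xbs^n)=A(\Xbs^n)\prod_{j\in\ical}\pi_j(k_j)$ you dropped the $1/P(\Xbs^n)$ factor that appears two lines earlier; either absorb it into $A$ or keep it explicit so the cancellation step reads cleanly.
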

\begin{proof}
	This follows from Lemma~\ref{lem:const} which implies $P(\lam_j = k_j, j \in \ical|\Xbs^n)$ and $P(\lam_j = m_j, j \in \ical| \Xbs^n)$ are equal for the collection of integers considered.
\end{proof}

We note that both $\wv_n$ and $\yv_{n-1}$ are based on conditional
probabilities, given $\Xbs^{n-1}$, of events in terms of
$\{\lam_j\}$. Updating $\wv_n$ based on $\yv_{n-1}$ amounts to
evaluating the values  a fixed probability measure assigns to a collection of sets, based on the values it assigns to a different collection of sets. The particular nature of these sets and Lemma~\ref{lem:ratio} allow this computation.

The formula has an algebraic structure. We work with polynomials of degree $d$, in indeterminate variables $\omgu$ and $\omgd$. We assume the product of $\omgu$ and $\omgd$ to be noncomutative. (That is, $\omgu \omgd \neq \omgd \omgu$.) Denote the space of such polynomials as $\xcal_d$. We think of $\omgu$ and $\omgd$ as digits $1$ and $0$, respectively. Then, a string consisting of $\omgu$ and $\omgd$ represents a binary number. Let $B(\cdot)$ be the map that produces this binary number given a string of $\omgu$ and $\omgd$. For example, $B(\omgu \omgd \omgu) = 1 0 1 \equiv 5$.

Let $\Lf_{y_{n-1}}(\cdot)$ be a ``linear'' map defined on $\xcal_d$ which maps a string $s$ of $\omgu$ and $\omgd$ to $(\yv_{n-1})_{B(s)}$. This implies, for example,
\begin{align*}
	\Lf_{y_{n-1}}(2 \omgu \omgd\omgu + 3 \omgd\omgu\omgu) = 
	2 (\yv_{n-1})_{5} + 3 (\yv_{n-1})_3.
\end{align*}
Let
\begin{align}\label{eq:def:ui:func}
	u^{(i)}_j(\omgu,\omgd) =
	\begin{cases}
		\rhob_j \omgd, & b_j(i) = 0 \\
		\omgu + \rho_j \omgd & b_j(i) = 1.
	\end{cases}
\end{align}
The following lemma describes the rule mapping $y_{n-1}$ to $w_n$. 
\begin{lem}\label{lem:Tex:L:expr}
  For $i=0,\dots,m-1$,
  \begin{align}\label{eq:yn-1:wn:map}
    (w_n)_i = \Lf_{y_{n-1}} \Big( u_1^{(i)}(\omgu,\omgd) \, u_2^{(i)}(\omgu,\omgd) \,\cdots
     u_d^{(i)}(\omgu,\omgd) \Big).
  \end{align}
\end{lem}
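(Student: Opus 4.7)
The plan is to derive the exact update from $\yv_{n-1}$ to $\wv_n$ by conditioning on the intermediate state $\Zs^{n-1}$, and then recognize that the resulting sum is exactly the expansion of the noncommutative polynomial $\prod_{j=1}^\ddim u_j^{(i)}$ under the evaluation map $\Lf_{\yv_{n-1}}$. Fix $i \in \{0,1,\dots,\mdim-1\}$ and let $z := (\bid_1(i),\dots,\bid_\ddim(i)) \in \{0,1\}^\ddim$, so that by the subscript indexing convention~\eqref{eq:sub:sup:nota} we have $(\wv_n)_i = P(\Zs^n = z \mid \Xbs^{n-1})$ and $(\yv_{n-1})_{z'} = P(\Zs^{n-1} = z' \mid \Xbs^{n-1})$ for each $z' \in \{0,1\}^\ddim$.

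By the tower property of conditional expectation,
\begin{align*}
    (\wv_n)_i = \sum_{z' \in \{0,1\}^\ddim} P(\Zs^n = z \mid \Zs^{n-1} = z',\, \Xbs^{n-1})\,(\yv_{n-1})_{z'}.
\end{align*}
The key structural claim I would establish is that for $z' \le z$ coordinatewise,
\begin{align*}
    P(\Zs^n = z \mid \Zs^{n-1} = z',\, \Xbs^{n-1})
    = \prod_{j \,:\, z'_j = 0,\, z_j = 1} \rho_j \;\cdot\, \prod_{j \,:\, z_j = 0} \rhob_j,
\end{align*}
while the probability is zero whenever $z' \not\le z$ (since $Z_j^{n-1} = 1$ forces $Z_j^n = 1$). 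For indices $j$ with $z'_j = 1$, the event $\lam_j \le n-1$ already forces $Z_j^n = 1$, so only indices with $z'_j = 0$ contribute randomness: for these, the constraint $\lam_j \ge n$ leaves only the binary decision $\lam_j = n$ versus $\lam_j > n$. This is where Lemma~\ref{lem:ratio} and its ingredient Lemma~\ref{lem:const} enter: the likelihood $P(\Xbs^{n-1} \mid \lams)$ is constant in each $\lam_j$ as long as $\lam_j \ge n$, because both the private observations at node $j$ and the shared observations on edges incident to $j$ are distributed according to the pre-change densities throughout times $\le n-1$. Hence conditioning on $\Xbs^{n-1}$ preserves the product-of-geometric-priors form on $\{n,n+1,\dots\}^{|\{j:\,z'_j = 0\}|}$, and memorylessness gives the independent Bernoulli$(\rho_j)$ transitions.

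The remaining step is purely algebraic. Distributively expanding $\prod_{j=1}^\ddim u_j^{(i)}$ according to~\eqref{eq:def:ui:func} produces a sum of noncommutative monomials $c_s\, s_1 s_2 \cdots s_\ddim$, where $s_j = \omgd$ is forced whenever $\bid_j(i) = 0$, and $s_j \in \{\omgu, \omgd\}$ whenever $\bid_j(i) = 1$. Identify a string $s$ with $z' \in \{0,1\}^\ddim$ via $z'_j = 1 \iff s_j = \omgu$; the forcing constraint becomes exactly $z' \le z$, and collecting scalar factors gives $c_s = \prod_{j:\, z_j = 0} \rhob_j \cdot \prod_{j:\, z_j = 1,\, z'_j = 0} \rho_j$. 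Since $\Lf_{\yv_{n-1}}(s) = (\yv_{n-1})_{B(s)} = (\yv_{n-1})_{z'}$, summing over all admissible $s$ reproduces the tower-property sum with the factorized kernel, establishing~\eqref{eq:yn-1:wn:map}. The main obstacle is the conditional-independence claim in the second paragraph, namely verifying the extension of Lemma~\ref{lem:ratio} from $\Xbs^n$ to $\Xbs^{n-1}$ with indices in $\{n,n+1,\dots\}$; once that is in hand, the rest is combinatorial bookkeeping.
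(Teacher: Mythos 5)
Your proposal is correct, and it reaches \eqref{eq:yn-1:wn:map} by a genuinely different route than the paper. The paper never writes down a transition kernel: it works directly with the change-point events, setting $A=[n]$, $b=\{n+1\}$, $B=[n+1]$, expanding $B = A + b$ and $B^c = A^c - b$ by finite additivity, and proving by induction (via Lemma~\ref{lem:ratio}) a substitution rule ``replace $b$ in slot $j$ by $\rho_j A^c$'' that turns probabilities of sequences of $B,B^c$ into linear combinations of probabilities of sequences of $A,A^c$; the noncommutative polynomial is then just bookkeeping for that substitution. You instead decompose $(w_n)_i$ by the tower property over $\Zs^{n-1}$ and identify the conditional law $P(\Zs^n = z \mid \Zs^{n-1}=z', \Xbs^{n-1})$ as a product of independent Bernoulli transitions, with the monomial expansion of $\prod_j u_j^{(i)}$ reproducing the Chapman--Kolmogorov sum. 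Your route is more transparent: it makes the claim in Proposition~\ref{prop:alg:rep} that $\Tex$ is a Markov transition matrix (and hence that its columns sum to one) immediate, and I verified your kernel against the displayed $d=2$ matrix \eqref{eq:Tex:d:2}. What the paper's route buys is that it only ever invokes the ratio identity of Lemma~\ref{lem:ratio} in its marginalized form, whereas your factorization of the kernel requires constancy of the full likelihood $P(\Xbs^{n-1}\mid\lams)$ in the coordinates $j$ with $z'_j=0$ over $\{n,n+1,\dots\}$ \emph{jointly with} the conditioning event $\{\lam_j \le n-1,\ j: z'_j = 1\}$ --- a marginally stronger statement, though it is exactly what \cite[Lemma~3]{AmiNgu13} supplies, and you correctly flag this as the one point needing care. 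One small imprecision in your justification: when $\lam_j \ge n$ but a neighbor $i$ has $\lam_i \le n-1$, the shared sequence $\Xb_{ij}$ is \emph{not} all pre-change on $[n-1]$; the relevant point is rather that its law depends only on $\lam_i \wedge \lam_j = \lam_i$ and hence is constant in $\lam_j$ over $\{n,n+1,\dots\}$. With that repaired, the argument is sound.
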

The sketch of the proof is given in Appendix~\ref{sec:proof:Tex:L:expr}. To get a sense of what~\eqref{eq:yn-1:wn:map} means, consider the case $d = 2$. Then, for example, 
\begin{align*}
(w_n)_2 = L_{y_{n-1}}\Big( (\omgu + \rho_1 \omgd) (\rhob_2 \omgd) \Big) &= L_{y_{n-1}}\Big( \rhob_2 \omgu \omgd + \rho_1 \rhob_2 \omgd \omgd \Big)  \\
&= \rhob_2 (y_{n-1})_2 + \rho_1 \rhob_2 (y_{n-1})_0.
\end{align*}
As can be seen from this example,~\eqref{eq:yn-1:wn:map} is a compact
way of expressing a linear relation $w_n = \Tex y_{n-1}$, for some $m
\times m$ matrix $\Tex$. For example, for $d=2$, the matrix is given by
\begin{align}\label{eq:Tex:d:2}
    \Tex = 
        \begin{pmatrix}
            1 & \rho_2 & \rho_1 & \rho_1 \rho_2 \\
            0 & \rhob_2 & 0 & \rho_1 \rhob_2 \\
            0 & 0 & \rhob_1 & \rhob_1 \rho_2 \\
            0 & 0 & 0 & \rhob_1 \rhob_2
        \end{pmatrix}.
\end{align}
This completes the proof.

\subsection{Bounding Lipschitz constant of $\Tex$}

Since $\Tex$ is a Markov transition matrix, we have $\onevec_\mdim^T \Tex = 0$. Note that our convention leads to the transpose of what is usually considered a Markov transition matrix. That is, columns of $\Tex$ sum to $1$ (not the rows). Based on Lemma~\ref{lem:Tex:L:expr}, it is not hard to observe the following:
\begin{itemize}
    \item The first column of $\Tex$ is equal to $\eb\uii{0} := (1,0,\dots,0) \in \reals^m$.
    \item The first row of $\Tex$ consists of elements of the form $\prod_{j \in S} \rho_j$, for $S \subset [d]$. In particular, the first element of the first row is $1$ (corresponding to $S = \emptyset$) while the last element is $\prod_{j=1}^d \rho_j$ (corresponding to $S = [d]$).
\end{itemize}

    We will apply Lemma~\ref{lem:lip:jacob} of Appendix~\ref{app:lip:const:lem} to the linear map $\Ft$ given by $\Ft(x) = \Tex x$ for  $x \in \reals^\mdim$. The Jacobian of $\Tex$ is constant and equal to $\Tex$. Applying Lemma~\ref{lem:lip:jacob}  with $u(x) = (\prod_{j=1}^d \rho_j) \eb\uii{0}$ (independent of $x$), we obtain 
    \begin{align*}
        \lip_\Ft \le \onemnorm{\underbrace{\Tex - (\prod_{j=1}^d \rho_j) \eb\uii{0} \onevec_m^T}_{=: \, A}}.
    \end{align*}
    Note that $\eb\uii{0} \onevec_m^T$ is an $\mdim \times m$ matrix
    with the first row being all ones, and the rest being all
    zeros. Thus, the matrix $A$ coincides with $\Tex$ outside the
    first row. Moreover, on the first row, where $\Tex$ has entry
    $\prod_{j \in S} \rho_j$, $A$ has entry $ \prod_{j \in S} \rho_j -
    \prod_{j =1}^d \rho_j \ge 0$.  That is, all the entries of $A$ are
    nonnegative.  Hence, the absolute column sums for $A$, are the
    same as its column sums. Furthermore, since all the columns of
    both $\Tex$ and $\eb\uii{0} \onevec_m^T$ sum to one, we have
    $\onemnorm{A} = \sum_{i} A_{ik} = 1 - \prod_{j=1}^d \rho_j$, for
    any $k$. This gives the desired bound on the Lipschitz
    constant. (It is not hard to verify that bound is sharp, that is,
    the Lipschitz constant is in fact equal to$1 - \prod_{j=1}^d \rho_j$.)

\subsection{Bounding Lipschitz constant of $\Tap$}
Recall the expression for $\Tap$ given in~\eqref{eq:Tap:def}. We will rewrite it as the composition of two functions. Recall that $\mdim := 2^\ddim$. Let $\Htmp : \reals^\ddim \to \reals^\mdim$ be defined as
\begin{align*}
    \Htmp(u) := \Htmp(u_1,\dots,u_d) :=
    \otimes_{j=1}^d \Big( {u_j \atop 1-u_j}\Big)
\end{align*}
where $\otimes$ is the (tensor) product of two measures defined in Section~\ref{sec:proof:approx:alg:rep}. Here, we use our convention (for embedding $\pcal_\ddim$ in $\reals^\mdim$) to treat the result of the tensor product as an element of $\reals^\mdim$. For example, for $d = 2$, $H(u_1,u_2) = \big(u_1 u_2, u_1(1-u_2), (1-u_1)u_2, (1-u_1)(1-u_2) \big)$.

Also, let $\Ktmp : \reals^\mdim \to \reals^\ddim$ be defined as
\begin{align*}
    \Ktmp(y) := \Big( 
        1-\rhob_1 [\Mf_1(y)]_0, \dots, 1-\rhob_\ddim[\Mf_\ddim(y)]_0 \Big)
\end{align*}
where $[\Mf_j(y)]_0$ is the value assigned to $0$ by the $j$th marginal of $y$. (Note that each marginal $\Mf_j(y)$ is a probability distribution on $\{0,1\}$.) To simplify notation, we will also use
\begin{align*}
    u_j(y) := 1 - \rhob_j [\Mf_j(y)]_0
\end{align*}
so that $\Ktmp(y) = \big(u_1(y),\dots,u_\ddim(y)\big)$.
 For example, for $d = 2$, with $ y = (y_3,y_2,y_1,y_0)$, we have $u_1(y) = 1 - \rhob_1(y_1+y_0)$ and $u_2(y) = 1 -\rhob_2(y_2 + y_0)$.
 
Recalling the definition~\eqref{eq:Rf:def} of $\Rf_{\rho_j}$, and~\eqref{eq:Tap:def}, one observes that $\Htmp \circ \Ktmp := \Htmp(\Ktmp(\cdot))$ is an extension of $\Tap$ to all of $\reals^\mdim$. In other words,
\begin{align*}
    \Tap = \Htmp \circ \Ktmp \big|_{\pcal_\ddim}.
\end{align*}
Thus, we can estimate the Lipschitz constant of $\Tap$ by computing
the Jacobian of $\Htmp \circ \Ktmp$ and applying
Lemma~\ref{lem:lip:jacob} of Appendix~\ref{app:lip:const:lem}. By chain rule, the Jacobian of the  composition is the product of Jacobians. More precisely, $\jacob_{\Htmp \circ \Ktmp}(y) = \jacob_\Htmp(u) \jacob_\Ktmp(y)$ with $u = \Ktmp(y)$.

To compute $\jacob_\Htmp(u) \in \reals^{\mdim \times \ddim}$, first note that we can write the $i$th component of $\Htmp(u)$ as $[\Htmp(u)]_i = \prod_{k=1}^\ddim u_k^{b_k(i)} (1 - u_k)^{1-b_k(i)}$ where $b_k(i)$ is the bit notation introduced in~\eqref{eq:bin:rep:def}. It follows that
\begin{align*}
    [\jacob_\Htmp(u)]_{ij} = \partial_{u_j} [\Htmp(u)]_i = (-1)^{1 - b_j(i)} \prod_{k\neq j} u_k^{b_k(i)} (1 - u_k)^{1-b_k(i)}
\end{align*}
For $y \in \pcal_\ddim$, we have $ u = \Ktmp(y) \in [0,1]^\ddim$, that is, both $u_k$ and $1-u_k$ are nonnegative for all $k \in [\ddim]$. It is not then hard to verify that
$ \sum_{i=1}^\mdim \big| [\jacob_\Htmp(u)]_{ij} \big| = 2$,
for all $j \in [d]$. That is, all the absolute column sums of $\jacob_\Htmp$ are equal to $2$, which implies $\onemnorm{\jacob_\Htmp(u)} = 2$ for $u \in [0,1]^\ddim$.

Turning to $\jacob_\Ktmp(y) \in \reals^{\ddim \times \mdim}$, we note that this is in fact a constant matrix, as $\Ktmp$ is an affine map. Using an expression similar to~\eqref{eq:marg:expr}, we have
\begin{align*}
    [\jacob_\Ktmp]_{j \ell} = \partial_{y_\ell} u_j = - \rhob_j 
    \partial_{y_\ell} \Big(
    \sum_{i :\, b_j(i) = 0} y_i \Big) = -\rhob_j (1 - b_j(\ell)).
\end{align*}
In other words, the $j$-th row of $\jacob_\Ktmp$ contains $-\rhob_j$ in columns $\ell$ with $b_j(\ell) = 0$, and is zero otherwise. For example, for $\ddim = 3$ (and $\mdim = 8$), we obtain
\begin{align*}
    \jacob_\Ktmp = -
    \begin{pmatrix}
        0 & 0 & 0 & 0 & \rhob_1 & \rhob_1 & \rhob_1 & \rhob_1 \\
        0 & 0 & \rhob_2 & \rhob_2 & 0 &0 & \rhob_2 & \rhob_2 \\
        0 & \rhob_3 & 0 & \rhob_3 & 0 & \rhob_3 & 0 & \rhob_3
    \end{pmatrix}
\end{align*}
According to Lemma~\ref{lem:lip:jacob}, it is possible to add a constant to each row of $\jacob_\Ktmp$ and still obtain an upper bound on the Lipschitz  constant of $\Tap$. We will add $\rho_j /2$ to each column in the $j$-th row. More precisely, let $\rb := (\rhob_1,\dots,\rhob_\ddim) \in \reals^\ddim$. Then, we consider $\jacob_\Ktmp + \frac12 \rb \onevec_\mdim^T$. For example, in the case of $d = 3$, we have
\begin{align*}
    \jacob_\Ktmp +  \frac12 \rb \onevec_\mdim^T= \frac12
    \begin{pmatrix}
        \rhob_1 & \rhob_1 & \rhob_1 & \rhob_1 & -\rhob_1 & -\rhob_1 & -\rhob_1 & -\rhob_1 \\
        \rhob_2 & \rhob_2 & -\rhob_2 & -\rhob_2 & \rhob_2 & \rhob_2 & -\rhob_2 & -\rhob_2 \\
        \rhob_3 & -\rhob_3 & \rhob_3 & -\rhob_3 & \rhob_3 & -\rhob_3 & \rhob_3 & -\rhob_3
    \end{pmatrix}.
\end{align*}
It is easy to verify that the absolute column sum for each column of
this new matrix equal to $\frac12 \sum_{j=1}^\ddim \rhob_j$. That is,
$\onemnorm{\jacob_\Ktmp +  \frac12 \rb \onevec_\mdim^T} = \frac12
\sum_{j=1}^\ddim \rhob_j$.  

We can now apply lemma~\ref{lem:lip:jacob} to obtain
\begin{align*}
    \lip_{\Tap} &\le \sup_{y \in \pcal_\ddim} \onemnorm{ \jacob_{\Htmp \circ \Ktmp}(y) + \big( \frac12 \jacob_\Htmp(u) \rb \big) \onevec_\mdim^T } \\
    &= \sup_{y \in \pcal_\ddim} \onemnorm{ \jacob_{\Htmp}(u)  \big[ \jacob_\Ktmp +  \frac12 \rb \onevec_\mdim^T  \big]} \\
    &\le \sup_{y \in \pcal_\ddim} \Big\{ \onemnorm{ \jacob_{\Htmp}(u)}\, \onemnorm{ \jacob_\Ktmp +  \frac12 \rb \onevec_\mdim^T  } \Big\} = 
   \sum_{j=1}^\ddim \rhob_j
\end{align*}
where as before $u = \Ktmp(y)$, and the last inequality follows by the sub-multiplicative property of $\onemnorm{\cdot}$. The proof is complete.




\bibliographystyle{unsrt}
\bibliography{semi_irf}

\appendix

\section{Proof of~\eqref{eq:gamma:classic:recur}}
\label{sec:proof:eq:gamma:classic:recur}
Recall that $\pi(k) := \pr(\lambda = k)$. Let $[n] := \{1,\dots,n\}$
and $[n-1]^c :=
\{n,n+1,\dots\}$. For $k,r \in [n-1]^c$, we have
\begin{align}\label{eq:ratio:recur:simp}
  \frac{\pr(\lambda = k|\Xb^{n-1})}{\pr(\lambda = r|\Xb^{n-1})}
  = \frac{P(\Xb^{n-1}|\lambda = k) \pi(k)}{ P(\Xb^{n-1}|\lambda = r) \pi(r)} = \frac{\pi(k)}{\pi(r)}
\end{align}
since the function $k \mapsto P(\Xb^{n-1} | \lambda = k)$ is constant over $[n-1]^c$. In fact, $P(\Xb^{n-1} | \lambda = k) =
\prod_{t=1}^{n-1}g(X^t)$ for all $k \ge
n$. In~\eqref{eq:ratio:recur:simp}, take $ r = n$, and sum over $k \in
[n-1]^c$ to obtain (after inversion)
\begin{align*}
  \frac{\pr(\lambda = n|\Xb^{n-1})}{\pr(\lambda \in [n-1]^c |\Xb^{n-1})}
  = \frac{\pi(n)}{\pi[n-1]^c}.
\end{align*}
For any subset $A \subset \nats := \{1,2,\dots\}$, let us use the
notation $\gam^{n-1}A := \pr (\lambda \in A|\Xb^{n-1})$. Thus, we have
shown $\gam^{n-1}\{n\} = \frac{\pi(n)}{\pi[n-1]^c} \gam^{n-1}[n-1]^c$.

From additivity of probability measures, we have  
\begin{align*}
  \gam^{n-1}[n-1]^c = 1-\gam^{n-1}[n-1], \quad \gam^{n-1}\{n\} = \gam^{n-1}[n] -  \gam^{n-1}[n-1].
\end{align*}
Substituting these in the earlier equation, we obtain
\begin{align*}
  \gam^{n-1}[n]  =  \frac{\pi(n)}{\pi[n-1]^c} + 
  \Big(1- \frac{\pi(n)}{\pi[n-1]^c}  \Big)\gam^{n-1}[n-1]
\end{align*}
which is the desired result.

\section{Proof of Lemma~\ref{lem:Tex:L:expr}}
\label{sec:proof:Tex:L:expr}
Let $\nats := \{1,2,\dots\}$ denote the set of natural numbers. Let $A:= [n]:= \{1,\dots,n\}$ and let $A^c$ be the complement of $A$ in $\nats$, that is, $A^c = \{n+1,n+2,\dots\}$. Similarly, let $B = [n+1]$ and let $B^c = \{n+2,n+3,\dots\}$. We also let $b := \{n+1\}$. (These notations are local to this proof.)

For an index set $\ical = \{i_1,\dots,i_r\} \subset d$, let $\gam_{\ical}^n$ denote the joint posterior of $\lam_\ell, \ell \in \ical$ given $\Xbs^n$. More precisely, $\gam_{\ical}^n(E_1,\dots,E_r) = \pr(\bigcap_{j=1}^r \{\lam_{i_j} \in E_j\} | \Xbs^n)$ for any collection $E_1,\dots, E_r$ of subsets of $\nats$.  Let $A^\circ$ denote either $A$ or $A^c$, and similarly for $B^\circ$. We would like to compute quantities of the form $\gam_\ical^n(B^\circ,\dots,B^\circ)$ in terms of known quantities $\gam_\ical^n(A^\circ,\dots,A^\circ)$. For simplicity, we will drop superscript $n$ from now on.

We will use $-$ and $+$ to denote set difference and disjoint union, respectively. For example, $B = A + b$ and $B^c = A^c - b$. We proceed in stages, by first finding probabilities of ``sequences of $A^c$ and $b$''; we do this by an example. Consider $\gam_{1234}(A^c,b,A^c,b)$. Applying Lemma~\ref{lem:ratio}, we have
\begin{align*}
    \frac{\gam_{1234}(A^c,b,A^c,b)}{\gam_{1234}(b,b,b,b)} 
    = \frac{\pi_1(A^c)}{\pi_1(b)} \frac{\pi_3(A^c)}{\pi_3(b)} = \frac{1}{\rho_1 \rho_3}.
\end{align*}
Similarly, 
\begin{align*}
    \frac{\gam_{1234}(A^c,A^c,A^c,A^c)}{\gam_{1234}(b,b,b,b)} 
    = \frac{1}{\rho_1 \rho_2 \rho_3 \rho_4}.
\end{align*}
It follows that
\begin{align*}
    \gam_{1234}(A^c,b,A^c,b) = \rho_2 \rho_4 \,\gam_{1234}(A^c,A^c,A^c,A^c),
\end{align*}
which is the desired result, since the RHS is known. By induction, we have the following rule: The probability of a sequence of $A^c$ and $b$ is the probability of the sequence of all-$A^c$ multiplied by ``$\rho_i$''s associated with places of ``$b$''s. We will later use a more compact notation: $A^c b A^c b = \rho_2 \rho_4 A^c A^c A^c A^c$, to express the same fact.

We turn to the case where we have a sequence of $A^c$ and $b$ an a single $A$. Consider, for example,
\begin{align*}
    \gam_{1234}(A^c,b,A,b) &= \gam_{1234}(A^c,b,\nats,b) - \gam_{1234}(A^c,b,A^c,b) \\
    &= \gam_{124}(A^c,b,b) - \gam_{1234}(A^c,b,A^c,b) \\
    &= \rho_{2} \rho_4\, \gam_{124}(A^c,A^c,A^c) - 
        \rho_2 \rho_4 \,\gam_{1234}(A^c,A^c,A^c,A^c) \\
    &= \rho_2 \rho_4 \, \gam_{1234}(A^c,A^c,A,A^c).
\end{align*}
where third equality follows by the rule regarding sequences of $A^c$ and $b$. Thus, by induction, we can revise our rule to include the sequences with a single $A$: We proceed by replacing ``$b$''s with $A^c$ and multiplying by corresponding ``$\rho_i$''s, leaving the $A$ intact.

Now, consider a sequence with more than one $A$. For example,
\begin{align*}
    \gam_{1234}(A^c,b,A,A) &= 
    \gam_{1234}(A^c,b,A,\nats) - \gam_{1234}(A^c,b,A,A^c) \\
    &= \gam_{123}(A^c,b,A) - \gam_{1234}(A^c,b,A,A^c) 
\end{align*}
where both terms involve sequences with single $A$. Applying our rule to each term and combining the result as before, we get, in compact notation, $A^c b A A = \rho_2 A^c A^c A A$. Thus, by induction, our rule extends to sequences of $A^c$, $b$, and arbitrary number of ``$A$''s: Replace ``$b$''s with ``$A^c$''s and scale appropriately, leaving ``$A$''s intact.

We are now ready to obtain probabilities of a sequence of $B$s and $B^c$s. Consider the following example,
\begin{align*}
    \gam_{12}(B^c,B) &= \gam_{12}(A^c - b, A+ b) \\
    &= \gam_{12}(A^c - b, A) + \gam_{12}(A^c -b,b) \\
    &= \gam_{12}(A^c,A) - \gam_{12}(b,A) + \gam_{12}(A^c,b) - \gam_{12}(b,b),
\end{align*}
by finite additivity of probability measures. We can represent this identity in a compact form. $B^cB =(A^c -b)(A+b) = A^c A - bA + A^cb - bb$. Applying our rule, we obtain
\begin{align*}
    B^c B &= 
    A^c A - \rho_1 A^c A + \rho_2 A^c A^c - \rho_1 \rho_2 A^c A^c \\
    &= (1-\rho_1) A^c A + (1-\rho_1)\rho_2 A^c A^c.
\end{align*}
This result can be obtained easier by replacing $b$ in the first and the second sets of parentheses with $\rho_1 A^c$ and $\rho_2 A^c$, respectively, and following rules of a noncommutative associative algebra,
\begin{align*}
    B^c B &= (A^c - b)(A+b) \\
    &= (A^c - \rho_1 A^c) (A + \rho_2 A^c) = (1-\rho_1)A^c(A+\rho_2 A^c) = 
    \rhob_1 A^c A + \rhob_1 \rho_2 A^c A^c.
\end{align*}
Using this procedure, we can express the probability of any sequence of $B$ and $B^c$ in terms of sequences of $A$ and $A^c$. As another example,
\begin{align}
    B^c B B B^c &= (A^c -b)(A+b)(A+b)(A^c - b) \notag \\
    &= (A^c - \rho_1 A^c)(A+ \rho_2 A^c) (A+\rho_3 A^c) (A^c - \rho_4A^c) \notag \\
    &= (\rhob_1 A^c) (A+ \rho_2 A^c) (A+\rho_3 A^c) (\rhob_4 A^c). 
    \label{eq:}
\end{align}
As before, the final expression is obtained by expanding. The general pattern is now clear and can be formally established by induction. The proof is complete. To link with the notation of the theorem, replace $A^c$ with $\omgd$ and $A$ with $\omgu$. The function $u^{(i)}$ defined in~\eqref{eq:def:ui:func} replaces a set of parantheses, in derivations above, with the correct expression in terms of $\omgd$ and $\omgu$, depending on whether the set of parantheses contains a $+$ or a $-$ sign. 

\section{Bounding the Lipschitz constant of a probability map}
\label{app:lip:const:lem}
This appendix is devoted to a lemma which allows us to estimate the
Lipschitz constant of a map $F : \pcal \to \pcal$, on a probability
space $\pcal$, based on the Jacobian matrix of its extension. Here, $\pcal := \pcal_\ddim := \pcal(\{0,1\}^\ddim)$ is considered to be a subset of $\reals^\mdim$ where $\mdim = 2^\ddim$. For a $C^1$ function $\Ft : U \to \reals^\mdim$ defined on some open subset $U$ of $\reals^\mdim$, let $\jacob_\Ft$ denote its Jacobian matrix, i.e.,
\begin{align*}
    \jacob_\Ft := \big( \partial_{x_j} \Ft_i \big) \in \reals^{\mdim \times \mdim}
\end{align*}
where $\partial_{x_j} \Ft_i$ is the partial derivative
of the $i$-th component of $\Ft$ w.r.t. the its $j$-th
variable.  

For a square matrix $A$ and $p \in [1,\infty]$, let $\mnorm{A}_p$ denote its
norm as an operator on $\ell_p$, that is, $\mnorm{A}_p :=
\sup_{\|x\|_{p} \le  1} \| A x\|_p$, where $\|\cdot\|_p$ is the vector
$\ell_p$ norm. It is well-known that $\mnorm{A}_1$ (
$\mnorm{A}_\infty$) is the maximum absolute column (row) sum of 
matrix $A$.

Recall that $\onevec_\mdim \in \reals^\mdim$ denotes the all-ones vector.

\begin{lem}\label{lem:lip:jacob}
    Let $U$ be an open subset of $\reals^\mdim$, containing $\pcal$. Let $\Ft : U \to \reals^\mdim$ be a $C^1$ extension of $F : \pcal \to \pcal$, that is, $\Ft \mid_\pcal = F$. Then, for any function $u : U \to \reals^\mdim$ with components in $L^1(U)$,
    \begin{align}\label{eq:lip:jacob}
        \lip_\Ft \le \;\sup_{x \in \pcal} 
        \onemnorm{\jacob_\Ft(x) - u(x) \onevec_\mdim^T}.
    \end{align}
\end{lem}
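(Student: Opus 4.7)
The approach is classical: integrate the Jacobian of $\Ft$ along a line segment joining two points of $\pcal$, and then exploit that increments of probability vectors are orthogonal to $\onevec_\mdim$, so that any rank-one term of the form $u(z)\onevec_\mdim^T$ can be freely subtracted from the Jacobian without affecting its action on $y-x$. What is really being bounded is the Lipschitz constant of $\Ft$ viewed as a map on $\pcal$ (equivalently, $\lip_F$), which is exactly what the two calls to the lemma in the proof of Proposition~\ref{prop:alg:rep} require.

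First, I would invoke convexity of $\pcal$: for arbitrary $x,y\in\pcal$, the segment $z_t := x+t(y-x)$, $t\in[0,1]$, lies in $\pcal\subset U$. Since $\Ft$ is $C^1$ on $U$, the fundamental theorem of calculus along this segment gives
\begin{align*}
\Ft(y)-\Ft(x) \;=\; \int_0^1 \jacob_\Ft(z_t)\,(y-x)\,dt.
\end{align*}

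The pivotal observation is that since $x,y\in\pcal$ are probability vectors, $\onevec_\mdim^T(y-x)=\onevec_\mdim^T y - \onevec_\mdim^T x = 1-1 = 0$. Consequently, for any $u:U\to\reals^\mdim$,
\begin{align*}
\bigl(u(z_t)\,\onevec_\mdim^T\bigr)(y-x)\;=\;u(z_t)\bigl(\onevec_\mdim^T(y-x)\bigr)\;=\;0,
\end{align*}
so the integrand is unchanged if I replace $\jacob_\Ft(z_t)$ by $\jacob_\Ft(z_t)-u(z_t)\onevec_\mdim^T$. Applying the triangle inequality for $\|\cdot\|_1$ to the integral, followed by the induced-operator-norm bound $\|Av\|_1\le \onemnorm{A}\|v\|_1$, and finally taking the supremum over $z\in\pcal$ (which is legitimate because $\{z_t:t\in[0,1]\}\subset\pcal$), I obtain
\begin{align*}
\|\Ft(y)-\Ft(x)\|_1 \;\le\; \Bigl(\sup_{z\in\pcal}\onemnorm{\jacob_\Ft(z)-u(z)\onevec_\mdim^T}\Bigr)\|y-x\|_1,
\end{align*}
which is the claim after dividing by $\|y-x\|_1$ and taking a supremum over $x\ne y$.

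There is essentially no deep step: the $L^1(U)$ hypothesis on $u$, combined with the continuity of $\jacob_\Ft$, is used only to guarantee that $t\mapsto\onemnorm{\jacob_\Ft(z_t)-u(z_t)\onevec_\mdim^T}$ is measurable and integrable on $[0,1]$; after that the bound is immediate. The lemma is really a packaging of two elementary facts, namely that the induced $\ell_1$ operator norm controls the Lipschitz constant and that probability increments annihilate constant-column rank-one matrices. The freedom in choosing $u$ is precisely what is exploited in the preceding subsections: for the exact algorithm one chooses $u(x)\equiv (\prod_j\rho_j)\eb\uii{0}$ to cancel the dominant column mass of $\Tex$, and for the approximate algorithm one chooses the row shifts $\tfrac12\rhob_j$ inside $\jacob_\Ktmp$ to balance the columns, each producing a bound strictly smaller than the naive $\sup_z\onemnorm{\jacob_\Ft(z)}$.
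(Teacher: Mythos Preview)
Your argument is correct and follows essentially the same route as the paper: integrate the Jacobian along the segment $z_t=x+t(y-x)\subset\pcal$, use $\onevec_\mdim^T(y-x)=0$ to subtract the rank-one term, and bound via the $\ell_1$ operator norm. The only cosmetic difference is that the paper phrases the last step through $\ell_1$--$\ell_\infty$ duality (pairing against a test vector $v$ and passing through $\infmnorm{\cdot}$ of the transpose), whereas you invoke the inequality $\|Av\|_1\le\onemnorm{A}\|v\|_1$ directly; the two are equivalent.
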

\begin{proof}
    Fix some $x,y \in \pcal$ and let $z_t := x + t(y-x)$ for $t \in [0,1]$. For $v \in \reals^\mdim$, we have
     \begin{align*}
         v^T \big(\Ft(y) - \Ft(x) \big) &=
         \int_0^1 v^T \frac{d}{dt} \Ft\big(x + t(y-x) \big) \, dt \\
         &= \int_0^1 v^T \jacob_\Ft(z_t) (y-x) \,dt \\
         &= \int_0^1 v^T \underbrace{\big[ \jacob_\Ft(z_t)  -u(z_t) \onevec_m^T\big]}_{=: \,R_t^T}(y-x) \,dt
      \end{align*} 
      where the last line follows since $x,y \in \pcal$ implies $\onevec_\mdim^T(y-x) = 0$. Using $\ell_1$--$\ell_\infty$ duality, we have
      \begin{align*}
          \onenorm{\Ft(y) - \Ft(x)} = \sup_{\infnorm{v} \le 1} \big|
          v^T \big(\Ft(y) - \Ft(x) \big) \big|
          &\le \int_0^1 \sup_{\infnorm{v} \le 1} 
              \big| (R_tv)^T (y -x)\big|\, dt \\
          &\le \onenorm{y-x} \int_0^1 \sup_{\infnorm{v} \le 1} \infnorm{R_t
          v} \, dt \\
          &= \onenorm{y-x} \int_0^1 \infmnorm{R_t} \, dt.
      \end{align*}
      Let us denote the RHS of~(\ref{eq:lip:jacob}) by $L$. Since $z_t
      \in \pcal$ for all $t \in [0,1]$, we have $\infmnorm{R_t} =
      \onemnorm{R_t^T} \le L$, for all $t \in [0,1]$, which completes
      the proof. 
\end{proof}

\section{An auxiliary lemma}
Here, we record the following ``constancy'' property of the likelihood 
for the graphical model~\eqref{eq:joint:def}. See~\cite[Lemma~3]{AmiNgu13} for
the proof.
\begin{lem}\label{lem:const}
  Let $\{i_1,i_2,\dots,i_r\} \subset [d]$ be a distinct collection of
  indices. The function 
  \begin{align*}
   (k_1,k_2,\dots,k_r) \mapsto
  P(\Xbs^n|\lam_{i_1}=k_1, \lam_{i_2}=k_2,\dots,\lam_{i_r} = k_r) 
  \end{align*}
   is  constant over $\{n+1,n+2,\dots\}^r$.
\end{lem}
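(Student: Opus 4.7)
The plan is to reduce the statement to the factorization~\eqref{eq:joint:def} together with a truncation observation: conditioning on a change point $\lam_j$ that is strictly larger than $n$ is, as far as observations up to time $n$ are concerned, indistinguishable from conditioning on $\lam_j = \infty$. Let $\ical = \{i_1,\dots,i_r\}$, $\kb = (k_1,\dots,k_r)$ with each $k_\ell > n$, and $\ical^c = [d]\setminus \ical$. First I would marginalize the unspecified change points,
\begin{align*}
  P(\Xbs^n \mid \lam_\ical = \kb)
  = \sum_{\ell_{\ical^c}} P(\Xbs^n \mid \lam_\ical = \kb,\, \lam_{\ical^c} = \ell_{\ical^c})\, P(\lam_{\ical^c} = \ell_{\ical^c} \mid \lam_\ical = \kb),
\end{align*}
and use independence of the priors $\{\pi_j\}$ in~\eqref{eq:joint:def} to replace $P(\lam_{\ical^c}=\ell_{\ical^c}\mid\lam_\ical=\kb)$ with $\prod_{j\in\ical^c}\pi_j(\ell_j)$, which is manifestly free of $\kb$. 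It then suffices to show that the summand $P(\Xbs^n \mid \lams)$ is independent of the $\kb$-coordinates whenever every $k_\ell > n$.

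Next I would analyze each factor in the factorization~\eqref{eq:joint:def} evaluated on time indices $\{1,\dots,n\}$. The private factor $P(\Xb_j^n \mid \lam_j)$ for $j \in \ical$ equals $\prod_{t=1}^{n} g_j(X_j^t)$ because $\lam_j = k_j > n$ forces every one of the first $n$ samples to be pre-change, so this factor is independent of $k_j$. For an edge factor $P(\Xb_e^n \mid \lam_{s_1},\lam_{s_2})$, recall it depends on the pair only through $\lam_e := \lam_{s_1}\wedge\lam_{s_2}$. If both endpoints lie in $\ical$, then $\lam_e > n$ and the factor is $\prod_{t=1}^n g_e(X_e^t)$. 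If exactly one endpoint, say $s_1$, lies in $\ical$, then for any $t \le n$ we have $t \le \lam_e$ iff $t \le \lam_{s_2}$ because $k_{s_1} > n \ge t$ automatically; hence the factor depends only on $\lam_{s_2}$, not on $k_{s_1}$. If neither endpoint lies in $\ical$, $\kb$ does not appear at all.

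Combining these observations, $P(\Xbs^n \mid \lams)$ is a product of quantities each independent of $\kb$ on the region $\{k_\ell > n\}^r$, hence so is its marginalization over $\lam_{\ical^c}$, giving the claim.

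The only genuinely non-cosmetic step is the cross-edge case, where a priori the edge likelihood depends on both endpoint change points; the key is the trivial but essential identity $k \wedge \lam = \lam$ for $k > n \ge t$ when restricted to the first $n$ time indices, which is what collapses the $\kb$-dependence. Everything else is bookkeeping over the factors in~\eqref{eq:joint:def}.
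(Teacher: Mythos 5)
Your proof is correct. Note that the paper itself does not prove this lemma --- it defers to \cite[Lemma~3]{AmiNgu13} --- so there is no internal proof to compare against; your argument is the natural self-contained one and fills that gap. The two essential ingredients are exactly the ones you isolate: (i) marginalizing the unconditioned change points against their (independent) priors, which is legitimate by the factorization~\eqref{eq:joint:def}, and (ii) the truncation observation that for $k_j > n$ every private factor collapses to $\prod_{t=1}^n g_j(X_j^t)$ and every edge factor depends on $\lam_e = \lam_{s_1}\wedge\lam_{s_2}$ only through its minimum with the other endpoint's change point, since $t \le k_j$ holds automatically for $t \le n$. The cross-edge case is indeed the only place where something could go wrong, and your identity handles it. This is also consistent with how the paper uses the lemma (the ratio computations in Lemma~\ref{lem:ratio} and in Appendix~\ref{sec:proof:eq:gamma:classic:recur}, where the single-sequence version $P(\Xb^{n-1}\mid\lambda=k)=\prod_{t=1}^{n-1}g(X^t)$ for $k \ge n$ is stated explicitly).
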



\end{document}